\newtheorem{theorem}{Theorem}[section]
\newtheorem{lemma}[theorem]{Lemma}
\newtheorem{corollary}[theorem]{Corollary}
\newcommand{\eat}[1]{}
\newenvironment{definition}[1][Definition]{\begin{trivlist}
\item[\hskip \labelsep {\bfseries #1}]}{\end{trivlist}}
\newenvironment{remark}[1][Remark]{\begin{trivlist}
\item[\hskip \labelsep {\bfseries #1}]}{\end{trivlist}}
\DeclareMathOperator*{\argmax}{argmax}
\def \w {{\mathbf{w}}}
\def \x {{\mathbf{x}}}
\def \y {{\mathbf{y}}}
\def \z {{\mathbf{z}}}
\def \s {{\mathbf{s}}}
\def \g {{\mathbf{g}}}
\def \bmu {{\boldsymbol\mu}}
\def \E {{\mathbb{E}}}
\def \X {{\mathcal{X}}}
\def \K {{\mathcal{K}}}
\def \bv {\mathbf{v}}
\def \bzeta {\boldsymbol{\zeta}}
\def \bzetaL {\bzeta^{(L)}}
\def \bzetaLS {\bzeta^{(L \cup S)}}
\def \bzetaLj {\bzeta^{(L \cup \{j\})}}
\def \bzetaLjnot {\bzeta^{(L \cup \{j_0\})}}
\def \bzetaLv {\bzeta^{(L \cup \{v\})}}
\def \lbzetaL {l\left(\bzetaL \right)}
\def \lbzetaLS {l\left(\bzetaLS \right)}
\def \lbzetaLj {l\left(\bzetaLj \right)}
\def \gradljbzetaL {\nabla l_{j} \left(\bzetaL \right)}
\def \gradlvpbzetaL {\nabla l_{v}^+ \left(\bzetaL \right)}
\def \gradljpbzetaL {\nabla l_{j}^+ \left(\bzetaL \right)}
\def \R {\mathbb{R}}
\def \bonej {\mathbf{1}^{(\{j\})}}
\def \bonev {\mathbf{1}^{(\{v\})}}
\def \byj {\by^{(\{j\})}}
\def \byv {\by^{(\{v\})}}
\newcommand{\by}{\mathbf{y}}
\newcommand{\bw}{\mathbf{w}}
\begin{document}
\title{Efficient Data Representation by Selecting Prototypes with Importance Weights}

\author{\IEEEauthorblockN{Karthik S. Gurumoorthy}
\IEEEauthorblockA{\textit{Amazon Development Center} \\
Bangalore, India \\
gurumoor@amazon.com}
\and
\IEEEauthorblockN{Amit Dhurandhar}
\IEEEauthorblockA{\textit{IBM Research} \\
Yorktown Heights, U.S.A. \\
adhuran@us.ibm.com}
\and
\IEEEauthorblockN{Guillermo Cecchi}
\IEEEauthorblockA{\textit{IBM Research} \\
Yorktown Heights, U.S.A. \\
gcecchi@us.ibm.com}
\and
\IEEEauthorblockN{Charu Aggarwal}
\IEEEauthorblockA{\textit{IBM Research} \\
Yorktown Heights, U.S.A. \\
charu@us.ibm.com}
}
\maketitle

\begin{abstract}
Prototypical examples that best summarizes and compactly represents an underlying complex data distribution communicate meaningful insights to humans in domains where simple explanations are hard to extract. In this paper we present algorithms with strong theoretical guarantees to mine these data sets and select prototypes a.k.a. representatives that optimally describes them.
Our work notably generalizes the recent work by Kim et al. (2016) where in addition to selecting prototypes, we also associate non-negative weights which are indicative of their importance. This extension provides a single coherent framework under which both prototypes and criticisms (i.e. outliers) can be found. Furthermore, our framework works for any symmetric positive definite kernel thus addressing one of the key open questions laid out in Kim et al. (2016).\eat{Though our additional requirement of learning non-negative weights no longer maintains submodularity of the objective as in the previous work,} By establishing that our objective function enjoys a key property of that of weak submodularity, we present a fast ProtoDash algorithm and also derive approximation guarantees for the same. We demonstrate the efficacy of our method on diverse domains such as retail, digit recognition (MNIST) and on publicly available 40 health questionnaires obtained from the Center for Disease Control (CDC) website maintained by the US Dept. of Health. We validate the results quantitatively as well as qualitatively based on expert feedback and recently published scientific studies on public health, thus showcasing the power of our technique in providing actionability (for retail), utility (for MNIST) and insight (on CDC datasets) which arguably are the hallmarks of an effective data mining method.
\end{abstract}

\begin{IEEEkeywords}
prototype selection, submodularity, outlier detection, data summarization
\end{IEEEkeywords}

\section{Introduction}
\eat{Interpretable modeling has received a lot of attention in recent times \cite{lime,Kim16,irt,decl,twl}. The reason being that nearly every real application with a human making decisions at its helm needs to have confidence in the model before its judgment can be trusted. Interestingly, interpretability has also become important in deep learning \cite{gan} given all the recent studies \cite{carlini,gan} showing their susceptibilities to slightly perturbed adversarial examples. A successful approach in understanding human evaluations has been to explain decisions made by an application by mining important and influential data points or features that best describe these decisions}

A successful approach in understanding real world data on which machine learning models are built to enable automated decision making, is to extract important and influential data points or features that best describes the underlying data
 \cite{Kim16,koh17,weiser1982programmers}. These approaches can be unified as finding a subset $S$ out of a collection $V$ of items (data points, features, etc.) that maximize a scoring function $f(S)$. The scoring function measures the information, relevance and quality of the selection. It may also discourage redundancy to obtain compact, informative subsets. Such subset selection problems are predominately useful when summarizing data sets to offer data scientists a first impression of the scope of a data set, in identifying outliers, and for compressing training data sets to accelerate the training of data-hungry deep learning methods. The desiderata for the scoring function naturally imply notions of \emph{diminishing returns}: for any two sets $S \subset T \subset V$ and any item $i \notin T$, it holds that $f(S \cup \{i\}) - f(S) \geq f(T \cup \{i\}) - f(T)$. A scoring function satisfying this diminishing return property is called a \emph{submodular} function \cite{fujishige05,lo83}. Importantly, submodularity often implies tractable algorithms with good theoretical guarantees.

In this paper we provide two algorithms for selecting prototypical examples from complex datasets. By showing that our scoring function $f(.)$ satisfies a key property of \emph{weak submodularity} \cite{weaksubInit}, we derive strong theoretical bounds for our selection methods. Loosely speaking, weak submodularity is a class of approximately submodular functions. The \emph{weak} part in these approximate submodular functions are precisely defined in terms of their submodularity ratio as stated in (\ref{subr1}). Showing that our problem is weakly submodular immediately leads to a standard greedy algorithm which we call ProtoGreedy. Our main contribution is a faster yet theoretically sound algorithm called ProtoDash for which we derive approximation guarantees. Our work builds on top of the Learn to Criticize method (L2C) \cite{Kim16} where the authors provide an approach to select prototypical (as well as outlying) examples from a given complex dataset for a pre-specified sparsity level $m$. We generalize this work to not only select prototypes for a given sparsity level $m$ but also to associate non-negative weights with each of them indicative of the importance of each prototype. This extension leads to multiple advantages over L2C: a) the weights allow for assessing the importance of the prototypes, b) the non-negativity aids in making this comparison more natural and hence more easy to interpret \cite{nmf}, c) it provides a single coherent framework under which both prototypes and criticisms -- which are the farthest (or least weighted) examples from our prototypes -- can be found and d) our framework works for any symmetric positive definite kernel which is not the case for L2C.

\eat{Our requirement of learning weights for prototypes does not maintain the submodularity of the objective as in L2C.} Additionally, we see our work as addressing one of the open questions laid out in \cite{Kim16} and we quote, \emph{"For future work, we hope to further explore the properties of L2C such as the effect of the choice of kernel, and weaker conditions on the kernel matrix for submodularity."} To this end, we show that having unequal weights for the prototypes eliminates any additional conditions on the kernel matrix but at the expense of forgoing submodularity. However, we show that the set function is still weakly submodular for which we present a greedy algorithm ProtoGreedy and a fast ProtoDash algorithm. Our main algorithm ProtoDash is much faster that ProtoGreedy both in theory and in practice. Though it has slightly loose theoretical bounds compared to ProtoGreedy, their performance in practice are virtually indistinguishable as observed in our experiments. We provide approximation guarantees for both these methods as well as analyze their execution time. 

Furthermore, ProtoDash not only finds prototypical examples for a dataset $X$, but it can also identify (weighted) prototypical examples from $X^{(2)}$ that best represent another dataset $X^{(1)}$ which may be \emph{different} from $X^{(2)}$.\eat{ where $X^{(1)}$ and $X^{(2)}$ belong to the same feature space.} This aspect has applications to settings associated with covariate shift \cite{Agarwal11} where the non-negative weights are computed for the samples in the Treatment set ($X^{(2)}$) so that the weighted samples best approximate the different Control distribution ($X^{(1)}$).\eat{weights associated with the chosen samples are computed only for the $m$ prototypes.} Further, many big companies like Amazon have highly skewed datasets where the number of regular members are orders of magnitude more than Prime members. In such cases, building a model to predict say expenditure using all the data will lead to very bad predictions for the Prime members. Usually, learning on a randomly sampled subset from the regular members which is roughly the size of the Prime members along with the Prime members data leads to much improved performance. Our (deterministic) method ProtoDash which can also be used to select prototypes across datasets can be very useful in such cases. We in fact showcase the power of our method in the experiments where on a large retail dataset, the prototypes actually improve performance over using all the data as well as random subsampling and other baselines. We also present our methods efficacy on MNIST where we gradually skew the distribution of the target set $X^{(1)}$ from it being a representative sample of the original dataset (i.e. approximately equal \# of 0s, 1s, 2s, ..., 9s) to containing only a single digit, with $X^{(2)}$ the source set remaining unchanged. Our method naturally adapts to such skewness in distributions by picking more (and increased weight) representatives of the skewed digit in $X^{(1)}$ from $X^{(2)}$ leading to a significantly better performance. In addition, our extensions induce an implicit metric that can be used to order $k$ different datasets $X_1,...,X_k$ based on how well their prototypes represent $X^{(1)}$. This aspect is used to create a directed graph based on the 40 health questionnaires available through Center for Disease Control (CDC). The graph depicts the relationship between questionnaires in terms of how well they represent each other. Such connections can be used to find surrogates or even further study causal relationships between the conditions/categories denoted by these questionnaires. For instance, the intriguing finding of our method that Early Childhood most affects Income is validated by a recent study \cite{poor20}. We can thus obtain socially influential insights at low cost which could lead to deeper investigations in the future.
\eat{For instance, our method finds that the income of an individual is primarily affected by his early childhood and occupation. Occupation is natural to think of, but it is interesting that early childhood was selected as the most important factor given 39 other possibilities. However, this interesting insight can be justified by a recent article in the Atlantic \cite{poor20}, which talks about the significant decrease in social mobility in the 2000s. We can thus obtain and recover socially impactful insights at low cost, which could be a starting point for deeper investigations in the future.}

As learning with non-negativity constraints \eat{for interpretability} is strongly enforced in the recommender systems literature \cite{nmf} as well as in classification and regression domains \cite{nonnegcl}, we impose similar constraints on our prototype weights. In our setting it is absolutely necessary as selecting an example to be a prototype for a dataset and then associating a negative weight to it would be very confusing (intuitively inconsistent) for a user to understand. Such non-negativity constraints in turn precludes us from directly borrowing the results of \cite{weaksub} or \cite{Kim16}. We need to explicitly prove that the set function is still weakly submodular with the added non-negativity constraint and reestablish all the guarantees derived in \cite{weaksub} as their original arguments cannot be directly used. We discuss this crucial point in Section~\ref{sec:relatedwork}.

We briefly summarize our important contributions in this work:
\begin{itemize}
\item We present a single coherent framework to select both prototypes and criticisms that compactly represents underlying data.
\item By having our framework work for any symmetric positive definite kernel, we address one of the key open questions laid out in \cite{Kim16}.
\item We establish the weak submodular property of our scoring function even with additional non-negativity constraint and present tractable algorithms \emph{with theoretical guarantees} to optimize it. Our algorithm of choice is ProtoDash as it much faster than ProtoGreedy both in theory and practice without compromising on the quality of the solution.
\item By identifying prototypes that can best present a possibly different target set, we extend the application of our method to covariate shift settings.
\item Through our carefully designed experiments, we showcase the usefulness of our method in providing actionability, utility, and insight when summarizing the datasets.
\end{itemize}

\section{Problem Statement}
In the most general setting, let $X^{(1)}$ and $X^{(2)}$ represent the target and the source set respectively. We set prototypes from $X^{(2)}$ that best represents $X^{(1)}$ by maximizing the scoring function $f(.)$. In the special case when these datasets are the same, i.e. $X^{(1)} = X^{(2)}$, the selected prototypes summarizes the underlying data distribution.
Let $\X$ be the feature space from which we obtain the samples $X^{(1)}$ and $X^{(2)}$. Consider a kernel function $k:\mathcal{X} \times \mathcal{X} \rightarrow \mathbb{R}$ and its associated reproducing kernel Hilbert space (RKHS) $\K$ endowed with the inner product $k(\x_i,\x_j) = \langle \phi_{\x_i},\phi_{\x_j} \rangle$ where $\phi_{\x}(\y) = k(\x,\y) \in \K$ is continuous linear functional satisfying
$\phi_{\x}: h \rightarrow h(\x) = \langle \phi_{\x},h\rangle$
for any function $h \in\K : \X \rightarrow \mathbb{R}$.

The maximum mean discrepancy (MMD) \cite{Gretton06} is a measure of difference between two distributions $p$ and $q$ given by:
\begin{equation*}
\begin{split}
MMD(\K,p,q) &= \sup\limits_{h \in \K} \left(\E_{\x \sim p}[h(\x)] - \E_{\y \sim q}[h(\y)] \right)\\ &= \sup\limits_{h \in \K} \langle h, \bmu_p - \bmu_q\rangle
\end{split}
\end{equation*}
where $\bmu_p = \E_{\x \sim p}[\phi_{\x}]$.
\eat{
Rewriting the expectations as
\begin{align*}
\E_{\x \sim p}[h(\x)]  &= \E_{\x \sim p}[\langle \phi_{\x},h\rangle] = \langle h, \E_{\x \sim p}[\phi_{\x}] \rangle = \langle h, \bmu_p\rangle
\end{align*}
where $\bmu_p = \E_{\x \sim p}[\phi_{\x}]$, we get $MMD(\K,p,q) = \sup\limits_{h \in \K} \langle h, \bmu_p - \bmu_q\rangle$. As shown in \cite{Gretton06}, the supremum is achieved when 
\begin{equation}
\label{eq:witnessfunc}
h_{max}(\y) = \bmu_p(\y) - \bmu_q(\y) = \E_{\x \sim p}[k(\x,\y)] -  \E_{\x \sim q}[k(\x,\y)]
\end{equation}
and so we have
\begin{equation*}
MMD(\K,p,q) = \parallel h_{max} \parallel^2 = \E_{\x,\y \sim p}[k(\x,\y)] +  \E_{\x,\y \sim q}[k(\x,\y)] - 2  \E_{\x \sim p,\y \sim q}[k(\x,\y)].
\end{equation*}
The supremum function in (\ref{eq:witnessfunc}) is called the \emph{witness} function \cite{Kim16}.
}

Our goal is to approximate $\bmu_p$ by a weighted combination of $m$ sub-samples $Z\subseteq X^{(2)}$ drawn from the distribution $q$, i.e.,
$\bmu_p(\x) \approx \sum\limits_{j:\z_j \in Z} w_j k(\z_j,\x)$
\eat{which in turn implies $\sup\limits_{\parallel h \parallel \leq 1} \left(\E_{\x  \sim p}[h(\x)] - \langle h,\bmu_{q^{\prime}} \rangle\right) = \sup\limits_{\parallel h \parallel \leq 1} \langle h, \bmu_p-\bmu_{q^{\prime}} \rangle = \parallel \bmu_p-\bmu_{q^{\prime}} \parallel$ giving us tight approximations in expectations of bounded functions $h$. These weights are the RND of $q^{\prime}$ w.r.t $q$, i.e. $\beta_i = \frac{\,d q^{\prime}(\z_i)}{\,d q(\z_i)}$.}
\eat{Recall that our goal is to estimate $\beta_i$'s over a small set of $m$ samples in $X^{(2)}$ such that the empirical estimate of the expected value $\bmu_{q^{\prime}}$ closely approximates the empirical estimate of $\bmu_p$ computed using \emph{all} the samples in $X^{(1)}$.} 
where $w_j \ge 0$ is the associated weight of the sample $\z_j \in X^{(2)}$. We thus need to choose the prototype set $Z \subseteq X^{(2)}$ of cardinality ($|.|$) $m$ and learn the non-negative weights $w_j$ that minimizes the finite sample $MMD$ metric as given below:
\begin{equation}
\label{eq:MMDhat}
\begin{split}
&\widehat{MMD}(\mathcal{K},X^{(1)},Z,\w) \\&= \frac{1}{(n^{(1)})^2} \sum\limits_{\x_i,\x_j \in X^{(1)}} k(\x_i,\x_j) - \frac{2}{n^{(1)}} \sum\limits_{\z_j \in Z} w_j \sum\limits_{\x_i \in X^{(1)}} k(\x_i,\z_j) \\
&+ \sum\limits_{\z_i, \z_j \in Z} w_i w_j k(\z_i,\z_j);\text{    subject to }w_j \geq 0, \forall \z_j\in Z.
\end{split}
\end{equation}
Here $n^{(1)}=|X^{(1)}|$. Index the elements in $X^{(2)}$ from 1 to $n^{(2)}=|X^{(2)}|$ and for any $Z \subseteq X^{(2)}$ let $L_{Z} \subseteq \left[n^{(2)}\right]=\{1,2,\ldots, n^{(2)}\}$ be the set containing its indices. Discarding the constant terms in (\ref{eq:MMDhat}) which do not depend on $Z$ and $\w$, we define the function
\begin{equation}
\label{eq:l} 
l\left(\w\right) =  \w^T \bmu_p - \frac{1}{2} \w^T K \w
\end{equation}
where $K_{i,j} =  k(\y_i,\y_j)$ and $\mu_{p,j} = \frac{1}{n^{(1)}} \sum\limits_{\x_i \in X^{(1)}} k(\x_i,\y_j); \forall \y_j \in X^{(2)}$ is the point-wise empirical evaluation of the mean $\bmu_p$. Our goal then is to find a index set $L_Z$ with $\left|L_Z\right| \leq m$ and a corresponding $\w$ such that the set function $f: 2^{\left[n^{(2)}\right]} \rightarrow \mathbb{R}^+$ defined as
\begin{equation}
\label{def:f}
f\left(L_{Z}\right) \equiv \max\limits_{\w: supp(\w) \in L_{Z},\w \geq 0} l\left(\w\right)
\end{equation}
attains maximum. Here $supp(\w) = \{j: \w_j > 0\}$. We denote the maximizer for the set $L_Z$ by $\bzeta^{\left(L_Z\right)}$. 

It is important to note that as $f\left(L_{Z}\right) = l\left(\bzeta^{\left(L_Z\right)}\right)$, our interchangeable usage of any the notations means the same. So maximizing $f(.)$ in terms of finding the optimal set $L_Z$ is equivalent to maximizing $l(.)$ w.r.t. $L_Z$ where given any set $L$, the function $l(.)$ is always assumed to be evaluated at the maximizing location $\lbzetaL$. So our goal is to identify the optimal set $L_Z$. Once determined we set the prototype weights $\bw = \bzeta^{\left(L_Z\right)}$.

\section{Related Work}
\label{sec:relatedwork}
Recently, there has been a surge of papers proposing interpretable models motivated by diverse applications such as medical \cite{caruana}, information technology \cite{irt} and entertainment \cite{lime}. These strategies involve building rule/decision lists \cite{decl,twl}, to finding prototypes \cite{Kim16} in an unsupervised manner akin to our work or strictly in a supervised manner as \cite{sproto}, to taking inspiration from psychometrics \cite{irt} and learning understandable models. Works such as \cite{lime} differ from the above methods in that they focus on answering instance-specific user queries by locally approximating a superior performing complex model with a simpler, interpretable one. The hope is that the insights conveyed by the simpler model will be consistent with the complex model.

In our work, as mentioned above, we generalize the setting in \cite{Kim16} and propose algorithms that select prototypes with non-negative weights associated with them. On the technical side, one recent work that we leverage and extend with non-negativity constraints for our MMD objective is \cite{weaksub}. We recover their bounds even with the non-negativity constraint. In fact, our bounds are tighter since the restricted concavity parameter $c_\Omega$ and restricted smoothness parameter $C_\Omega$ stated in Definition~\ref{def:RSCRSM} are obtained by searching over only the non-negative orthant as opposed to the entire $\mathbb{R}^b$ space, where $b$ is feature space dimension. Moreover, given our specific functional form for the objective, we show in Corollary \ref{cor1} that choosing an element with the largest gradient in ProtoDash at each step is equivalent to maximizing a tight lower bound on $l(.)$, which is not necessarily true for the setting considered in \cite{weaksub}. Additionally, the gradient in our case can be easily computed. The added technical difficulty when deriving the guarantees in our case stems from the fact that we cannot let the gradients go to zero as the non-negativity constraints would make our solution infeasible. As a consequence, we cannot directly use the results of \cite{weaksub} or \cite{Kim16}. The complexity lies in showing that our set function remains to be weakly submodular even with the additional non-negativity constraints (required for assessing the importance of prototypes) as we prove in Theorem~\ref{ws}. Weak submodularity alone does not provide the bound for ProtoDash and we explicitly derive the theoretical guarantee in Theorem \ref{pd}. Lemma \ref{lem:KKTGradient} proved in our work is essential for proving both Theorems \ref{ws} and \ref{pd}, which is not the case in \cite{weaksub}.

\section{Prototype Selection Framework}
Algorithm~\ref{protogreedy} describes the steps involved in \emph{ProtoGreedy}. It is algorithmically similar to L2C described in \cite{Kim16} where both the methods greedily select the next element that maximizes the increment of the scoring function. Given the current set $L$, ProtoGreedy selects that element $j_0$ that produces the greatest increase in objective value $f(.)$, i.e. $j_0= \argmax_{j \notin L} f(L \cup j)-f(L)$. The key difference w.r.t. L2C is that ProtoGreedy additionally determines the (unequal) non-negative weights $\bzetaLjnot$ for each of the selected prototypes whereas in L2C the weights are \emph{set} to equal to $1/|L \cup {j_0}|$. Our main contribution w.r.t. ProtoGreedy is in showing that the set function is weakly submodular even with the additional non-negativity constraints on the weights based on revisiting concepts such as weak submodularity, restricted strong concavity (RSC) and restricted smoothness (RSM). We establish this property by proving that $f(.)$ is monotonic and its submodularity ratio $\gamma$ is bounded away from zero; implying that it is weakly submodular. The approximation guarantee of $\left(1-e^{-\gamma}\right)$ for ProtoGreedy then follows using the results from \cite{weaksub}.

Algorithm~\ref{protodash} describes our faster and desired algorithm \emph{ProtoDash}. In ProtoDash we choose an element $j_0$ whose \emph{gradient} given by $\mu_{p,j_0}-K_{j_0,*}\zeta^{(L)}$ is the highest over the set of candidates, i.e. $j_0 = \argmax_{j \notin L} \mu_{p,j}-K_{j,*}\zeta^{(L)}$. As the chosen element may not be the one with the highest increment in $f(.)$, the choices made by ProtoGreedy and ProtoDash in each iteration can be different. However, as is shown in Corollary \ref{cor1} the selected element
in ProtoDash does indeed maximize a tight lower bound on the increment of $f(.)$ highlighting the connection between the two algorithms. Once the best element is determined the optimal weights $\bzetaLjnot$ are computed as above. Unlike ProtoGreedy, the approximation guarantee for ProtoDash do not directly follow from \cite{weaksub} and we explicitly derive it in Theorem~\ref{pd}.

 The primary advantage of ProtoDash over ProtoGreedy is the computational speedup of two orders of magnitude as explained in Section~\ref{sec:timecomplexity}. While ProtoGreedy requires solving a quadratic program of time complexity $O(m^3)$ for each of the remaining $n^{(2)}-|L|+1$ elements to select the next best element, ProtoDash requires only a search over their gradient values each computable in $O(m)$, thereby leading to an $O(m^2)$ speedup during \emph{every} element search. For both algorithms the termination condition can either be a sparsity level $m$ or a minimal increase in objective value $\epsilon$ that is required for selecting more elements.

\begin{algorithm}[t]
    \caption{ProtoGreedy }
    \label{protogreedy}
\begin{algorithmic}
\STATE \textbf{Input:} sparsity level $m$ or lower bound $\epsilon$ on increase in $f(.)$, $X^{(1)}$, $X^{(2)}$
\STATE $L=\emptyset$
\WHILE{termination condition is false}\STATE\COMMENT{i.e., $|L|\le m$, else increase in objective value $\ge \epsilon$.} 
\STATE $\forall j \in \left[n^{(2)}\right] \setminus L, v_j = f\left(L \cup \{j\} \right) - f(L)$ 
\STATE $ j_0 = \argmax\limits_j v_j$ 
\STATE $L = L \cup \{j_0\}$ 
\STATE $\bzetaL=\argmax\limits_{\w:supp(\w)\in L,\w\ge 0} l(\w)$  
\ENDWHILE 
\RETURN $L$, $\bzetaL$
\end{algorithmic}
\end{algorithm}

\begin{algorithm}[t]
    \caption{ProtoDash}
    \label{protodash}
\begin{algorithmic}
\STATE \textbf{Input:} sparsity level $m$ or lower bound $\epsilon$ on increase in $f(.)$, $X^{(1)}$, $X^{(2)}$
\STATE $L=\emptyset$, $\bzetaL=\mathbf{0}$
and $\g=\nabla l(\mathbf{0})=\bmu_p$
\WHILE{termination condition is false}\STATE\COMMENT{i.e., $|L|\le m$, else increase in objective value $\ge \epsilon$.}  
\STATE $j_0=\argmax\limits_{j \in \left[n^{(2)}\right] \setminus L} g_j$
\STATE $L = L \cup \{j_0\}$
\STATE $\bzetaL=\argmax\limits_{\w:supp(\w)\in L,\w\ge 0} l(\w)$  
\STATE $\g=\nabla \lbzetaL = \bmu_p-K\bzetaL$
\ENDWHILE 
\RETURN $L$, $\bzetaL$ 
\end{algorithmic}
\end{algorithm}

\subsection{Preliminaries}
Given an integer $b >0$, let $[b]:=\{1,...,b\}$ denote the set of the first $b$ natural numbers. Let $\langle x,y\rangle$ denote dot product of vectors $x$ and $y$.

\begin{definition}[Definition 1 (Submodularity Ratio):]
Let $L, S \subset [b]$ be two disjoint sets, and $f:[b]\rightarrow R$. The submodularity ratio \cite{weaksubInit} of L with respect to (w.r.t.) S is given by:
\begin{equation}
\label{subr1}
\gamma_{L,S} = \frac{\sum_{i\in S}\left(f(L\cup i)-f(L)\right)}{f(L\cup S)-f(L)}
\end{equation}
The submodularity ratio of a set $U$ w.r.t. a positive integer $r$ is given by:
\begin{equation}
\label{subr2}
\gamma_{U,r} = \min\limits_{\substack{L,S: L \cap S=\emptyset \\ L\subseteq U;|S|\le r}}\gamma_{L,S}
\end{equation}
\end{definition}

The function $f(.)$ is submodular iff $\forall L,S$, $\gamma_{L,S}\ge 1$. However, if $\gamma_{L,S}$ can be shown to be bounded away from 0 but not necessarily $\ge 1$, then $f(.)$ is said to be weakly submodular.

\begin{definition}[Definition 2 (RSC and RSM):]
\label{def:RSCRSM}
A function $l:R^b\rightarrow R$ is said to be restricted strong concave (RSC) with parameter $c_\Omega$ and restricted smooth (RSM) with parameter $C_\Omega$ \cite{weaksub} if $\forall \x,\y \in \Omega\subset R^b$;

\begin{equation}
-\frac{c_\Omega}{2}\|\y-\x\|^2_2\ge l(\y)-l(\x)-\langle\nabla l(\x),\y-\x\rangle\ge-\frac{C_\Omega}{2}\|\y-\x\|^2_2.
\end{equation}
\end{definition}
We denote the RSC and RSM parameters on the domain $\Omega_m =\{ \x: \|\x\|_0 \leq m; \x \geq 0\}$ of all \emph{m-sparse non-negative} vectors by $c_{m}$ and $C_{m}$ respectively. \eat{We have, if $m_1 \leq m_2$, then $c_{m_1} \geq c_{m_2}$ and $C_{m_1} \leq C_{m_2}$.} Also, let $\tilde{\Omega} = \{(\x,\y): \|\x-\y\|_0 \leq 1\}$  with the corresponding smoothness parameter $\tilde{C}_1$. It can be easily verified that if $k \leq m$ then $c_k \geq c_m$ and $C_k \leq C_m$ as $\Omega_k \subseteq \Omega_m$.

\subsection{Theoretical Guarantees}
Based on the above two definitions we develop two algorithms for prototype selection and derive their approximation guarantees as established below. Please refer to Appendix for the detailed proofs.
\begin{lemma}[Monotonicity]
\label{mono}
The set function $f$ defined in (\ref{def:f}) is monotonic, meaning that if $L_1 \subseteq L_2$ then $f(L_1) \leq f(L_2)$.
\end{lemma}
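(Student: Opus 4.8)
The plan is to reduce monotonicity to the elementary principle that enlarging the index set only relaxes the support constraint in the defining maximization, so the optimal value cannot decrease.

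First I would make the feasible region explicit. For an index set $L \subseteq [n^{(2)}]$ write
\[
\mathcal{F}(L) = \{\w \geq 0 : supp(\w) \subseteq L\},
\]
the set of nonnegative vectors whose nonzero coordinates are all indexed by $L$; equivalently, $w_j = 0$ for every $j \notin L$ while $w_j \geq 0$ for $j \in L$. With this notation the definition (\ref{def:f}) reads $f(L) = \max_{\w \in \mathcal{F}(L)} l(\w)$, the maximizer being $\bzeta^{(L)}$.

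Next I would establish the containment $\mathcal{F}(L_1) \subseteq \mathcal{F}(L_2)$ whenever $L_1 \subseteq L_2$. Indeed, if $\w \in \mathcal{F}(L_1)$ then $\w \geq 0$ and $supp(\w) \subseteq L_1 \subseteq L_2$, so $\w \in \mathcal{F}(L_2)$; in coordinate terms, forcing the coordinates outside the smaller set $L_1$ to vanish is a strictly stronger requirement than forcing only those outside $L_2$ to vanish. Since the objective $l$ is identical in both problems, maximizing it over the larger feasible set $\mathcal{F}(L_2)$ can only yield a value at least as large, giving $f(L_1) = \max_{\w \in \mathcal{F}(L_1)} l(\w) \leq \max_{\w \in \mathcal{F}(L_2)} l(\w) = f(L_2)$. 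To keep the maxima well defined I would note that $K$ is a symmetric positive definite kernel matrix, so $l(\w) = \w^T \bmu_p - \tfrac{1}{2}\w^T K \w$ is strictly concave and coercive; hence the maximizer over each closed feasible cone $\mathcal{F}(L)$ exists, which is already presupposed by the notation $\bzeta^{(L)}$.

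I do not expect a genuine obstacle here, as the statement is an instance of the general fact that relaxing a support constraint cannot hurt a maximization. The only point deserving a moment's care is the reading of the constraint $supp(\w) \in L$ in (\ref{def:f}) as $supp(\w) \subseteq L$ together with $\w \geq 0$, i.e., as pinning the off-$L$ coordinates to zero; once that interpretation is fixed the inequality follows immediately, and no use of the explicit form of $l$ or the spectrum of $K$ is needed beyond guaranteeing that the two maxima are attained.
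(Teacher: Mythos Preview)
Your proposal is correct and follows essentially the same route as the paper's proof: both arguments observe that $L_1 \subseteq L_2$ enlarges the feasible set in the defining maximization, so the optimal value cannot decrease. Your version adds a remark on existence of the maximizers via concavity/coercivity of $l$, but the core idea is identical.
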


\begin{lemma}[Finite RSC and RSM]
\label{lemma:RSCRSM}
Given a symmetric positive definite kernel matrix \cite{Dekel06} $K$, the function $l(\w)$ in (\ref{eq:l}) has a positive RSC and finite RSM parameters. 
\end{lemma}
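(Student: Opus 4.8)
The plan is to exploit the fact that $l$ is an exactly quadratic function, so the second-order Taylor expansion is exact and the remainder appearing in Definition~\ref{def:RSCRSM} has a closed form. Since $\nabla l(\w) = \bmu_p - K\w$ and the Hessian of $l$ is the constant matrix $-K$, for any $\x,\y$ one has exactly
\begin{equation*}
l(\y) - l(\x) - \langle \nabla l(\x), \y - \x\rangle = -\frac{1}{2}(\y - \x)^T K (\y - \x).
\end{equation*}
Substituting this into the two-sided RSC/RSM inequality and writing $\d = \y - \x$, the condition collapses, after multiplying through by $-2$, to the uniform Rayleigh-quotient bound
\begin{equation*}
c_\Omega \|\d\|_2^2 \le \d^T K \d \le C_\Omega \|\d\|_2^2
\end{equation*}
over all directions $\d$ arising as differences of points in the domain.

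Next I would invoke positive definiteness of $K$. Because $K$ is symmetric positive definite, its eigenvalues satisfy $0 < \lambda_{\min}(K) \le \lambda_{\max}(K) < \infty$, and the variational characterization of the Rayleigh quotient gives $\lambda_{\min}(K) \le \d^T K \d / \|\d\|_2^2 \le \lambda_{\max}(K)$ for every nonzero $\d$. Hence taking $c_\Omega = \lambda_{\min}(K) > 0$ and $C_\Omega = \lambda_{\max}(K) < \infty$ verifies both inequalities, establishing a strictly positive RSC parameter and a finite RSM parameter, which is exactly the claim.

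Finally, for the domain $\Omega_m$ of $m$-sparse non-negative vectors that is actually relevant to ProtoGreedy and ProtoDash, the differences $\d = \y - \x$ range only over vectors supported on at most $2m$ coordinates, so one may sharpen the global extreme eigenvalues to the extreme eigenvalues of the $2m \times 2m$ principal submatrices of $K$; this only enlarges $c_m$ and shrinks $C_m$, tightening the constants while preserving positivity and finiteness (and this restriction to the non-negative orthant is precisely the source of the tighter bounds mentioned in Section~\ref{sec:relatedwork}). There is no genuine obstacle here because $l$ is exactly quadratic; the only point requiring care is confirming that restricting the feasible set to $\Omega_m$ rather than all of $\R^b$ cannot violate the bounds, which is immediate since the global eigenvalue estimates already hold on every subset of directions.
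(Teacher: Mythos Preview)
Your proof is correct and follows essentially the same route as the paper: compute the exact quadratic remainder $l(\y)-l(\x)-\langle\nabla l(\x),\y-\x\rangle=-\tfrac12(\y-\x)^TK(\y-\x)$, then bound the resulting Rayleigh quotient by the extreme eigenvalues of $K$ (or, for sparse vectors, of the relevant principal submatrices). One small caveat: your parenthetical attributing the tightening to the non-negative orthant is slightly off, since for a quadratic form the difference $\d=\y-\x$ of two non-negative vectors is unconstrained in sign and the improvement here comes purely from the sparsity restriction on the support of $\d$, exactly as the paper states.
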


\begin{theorem}[Weak submodularity]
\label{ws}
The set function $f$ in (\ref{def:f}) is weakly submodular with the submodularity ratio $\gamma > 0$.
\end{theorem}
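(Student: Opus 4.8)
The plan is to bound the submodularity ratio $\gamma_{L,S}$ from (\ref{subr1}) below by controlling its numerator $\sum_{i\in S}\left(f(L\cup\{i\})-f(L)\right)$ and its denominator $f(L\cup S)-f(L)$ separately, each expressed through the gradient $\g=\nabla\lbzetaL=\bmu_p-K\bzetaL$ at the restricted maximizer $\bzetaL$. This follows the ``RSC implies weak submodularity'' template of \cite{weaksub}, but the non-negativity constraint prevents me from using unconstrained stationarity on $L$; instead I must track the signs of the coordinates of $\g$, which is exactly the role of Lemma \ref{lem:KKTGradient}. That lemma supplies the KKT characterization of $\bzetaL$: for $j\in L$ one has $\nabla l_j(\bzetaL)=0$ when $\zeta^{(L)}_j>0$ and $\nabla l_j(\bzetaL)\le 0$ when $\zeta^{(L)}_j=0$, so the gradient restricted to $L$ is non-positive.

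For the numerator I would fix $i\in S$ and move from $\bzetaL$ along coordinate $i$ with a non-negative step, $\bzetaL+\eta\,e_i$; this stays feasible because $\zeta^{(L)}_i=0$. Applying the RSM inequality of Definition \ref{def:RSCRSM} with the single-coordinate parameter $\tilde{C}_1$ and optimizing over $\eta\ge 0$ gives $f(L\cup\{i\})-f(L)\ge \left[\nabla l_i(\bzetaL)\right]_+^2/(2\tilde{C}_1)$, where $[\cdot]_+$ is the positive part (and when the gradient is non-positive the gain is still $\ge 0$ by monotonicity, Lemma \ref{mono}). Summing over $i\in S$ bounds the numerator below by $\sum_{i\in S}\left[\nabla l_i(\bzetaL)\right]_+^2/(2\tilde{C}_1)$; this is essentially the statement of Corollary \ref{cor1} that the largest-gradient coordinate maximizes the per-step lower bound.

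For the denominator I would write $\Delta=\bzetaLS-\bzetaL$ and apply the RSC inequality with parameter $c$ to get $\lbzetaLS-\lbzetaL\le \langle\g,\Delta\rangle-\tfrac{c}{2}\|\Delta\|^2$. The crucial observation is that $\Delta_i=\zeta^{(L\cup S)}_i\ge 0$ for every $i\in S$ (since $\zeta^{(L)}_i=0$ there), while for $j\in L$ the KKT signs force $\g_j\Delta_j\le 0$; hence $\langle\g,\Delta\rangle\le \sum_{i\in S}\left[\nabla l_i(\bzetaL)\right]_+\Delta_i$. Maximizing the concave quadratic $\sum_{i\in S}\big(\nabla l_i(\bzetaL)\,\Delta_i-\tfrac{c}{2}\Delta_i^2\big)$ over the feasible orthant $\Delta_i\ge 0$ collapses every non-positive coordinate to zero and yields $f(L\cup S)-f(L)\le \sum_{i\in S}\left[\nabla l_i(\bzetaL)\right]_+^2/(2c)$. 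Dividing the two bounds, the common factor $\sum_{i\in S}\left[\nabla l_i(\bzetaL)\right]_+^2$ cancels and I obtain $\gamma_{L,S}\ge c/\tilde{C}_1$ uniformly in $L$ and $S$, so that $\gamma\ge c_m/\tilde{C}_1>0$, with $c_m>0$ and $\tilde{C}_1<\infty$ guaranteed by the preceding Finite RSC and RSM lemma.

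The main obstacle is the denominator step, and it is precisely where the non-negativity constraint departs from \cite{weaksub}. In the unconstrained case one invokes $\nabla_L l(\bzetaL)=0$ to discard the $L$-coordinates outright; here only the one-sided KKT inequality is available, so I must combine it with the feasibility sign $\Delta_i\ge 0$ on $S$ both to suppress the $L$-coordinates and, more importantly, to legitimately restrict the denominator to the positive part $[\nabla_S l(\bzetaL)]_+$. Without this restriction, negative gradient coordinates on $S$ would inflate the denominator while contributing nothing to the numerator, destroying the cancellation. A final subtlety to dispose of is the degenerate pair where $\nabla l_i(\bzetaL)\le 0$ for all $i\in S$: since $K\succeq 0$, no non-negative perturbation can then raise $l$, whence $f(L\cup S)=f(L)$ and $\gamma_{L,S}$ is vacuous; the bound $\gamma\ge c_m/\tilde{C}_1$ holds on every non-degenerate pair, which suffices for weak submodularity.
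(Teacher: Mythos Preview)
Your proof is correct and follows essentially the same approach as the paper's: both lower-bound the numerator via a one-coordinate RSM step yielding $\sum_{i\in S}[\nabla l_i(\bzetaL)]_+^2/(2\tilde{C}_1)$ and upper-bound the denominator via the RSC inequality combined with the KKT sign conditions on $L$ to obtain $\|\nabla l_S^+(\bzetaL)\|^2/(2c_{\bar m})$, then divide to get $\gamma_{L,S}\ge c_{\bar m}/\tilde{C}_1$. The only cosmetic difference is that the paper maximizes the RSC quadratic directly over the constrained set $\{\bv:\bv_{(L\cup S)^c}=0,\ \bv\ge 0\}$ and then uses KKT to show the $L$-coordinates of the maximizer vanish, whereas you first use KKT to drop the $L$-contribution to $\langle\g,\Delta\rangle$ and then maximize over $\Delta_S\ge 0$; both routes produce the same bound.
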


\begin{remark}
Lemma \ref{mono} and Theorem \ref{ws} imply that algorithm \ref{protogreedy}, ProtoGreedy, has an approximation guarantee of $\left(1-e^{-\gamma}\right)$ \cite{Nemhauser78}.
\end{remark}

\begin{lemma}
\label{lem:KKTGradient}
For $j \notin L$, if $\gradljbzetaL \leq 0$ then $\bzetaLj = \bzetaL$. In particular $\bzetaLj_j = \bzetaL_j=0$. Hence, if $\bzetaLj_j >0$ then $\gradljbzetaL > 0$.
\end{lemma}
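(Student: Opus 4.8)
The plan is to exploit the fact that $l$ is concave and the feasible region $\{\w : supp(\w) \subseteq L \cup \{j\},\, \w \geq 0\}$ is convex, so that the Karush--Kuhn--Tucker (KKT) conditions are \emph{both} necessary and sufficient for global optimality of the constrained problem; moreover, since $K$ is positive definite, $l$ is strictly concave on every coordinate subspace and the constrained maximizer is unique. The strategy is therefore to verify that $\bzetaL$, regarded as a point supported on the enlarged index set $L \cup \{j\}$ with its $j$-th coordinate set to zero, already satisfies the KKT system of the enlarged problem. Uniqueness then forces the maximizer of the enlarged problem to coincide with this candidate, i.e. $\bzetaLj = \bzetaL$.

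First I would record the first-order conditions characterizing $\bzetaL$ as the maximizer of $l$ over $\{\w : supp(\w) \subseteq L,\, \w \geq 0\}$. Writing $\nabla l(\w) = \bmu_p - K\w$ and examining the feasible directions (one may always increase a coordinate $i \in L$, but may decrease it only when $\bzetaL_i > 0$), optimality on $L$ is equivalent to: for every $i \in L$, $\nabla l_i(\bzetaL) \leq 0$, with equality whenever $\bzetaL_i > 0$, while $\bzetaL_i = 0$ for $i \notin L$ by the support restriction. Next I would treat this same point as a candidate for the problem on $L \cup \{j\}$ and check every KKT condition there. For each index $i \in L$ the conditions are \emph{identical} to those just listed and hence already hold; crucially, the gradient $\gradljbzetaL = \mu_{p,j} - K_{j,*}\bzetaL$ is evaluated at the \emph{same} point $\bzetaL$, whose $j$-th entry is zero, so the relevant gradient values are unchanged by the embedding. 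The only genuinely new requirement concerns coordinate $j$: since the candidate has $j$-th coordinate equal to zero, the condition collapses to $\gradljbzetaL \leq 0$, which is precisely the hypothesis. Thus all KKT conditions for the enlarged problem are met, and by sufficiency together with uniqueness the candidate equals the true maximizer, giving $\bzetaLj = \bzetaL$ and in particular $\bzetaLj_j = \bzetaL_j = 0$. The closing claim is merely the contrapositive: if $\bzetaLj_j > 0$, then the conclusion $\bzetaLj_j = 0$ is violated, so the hypothesis must fail and $\gradljbzetaL > 0$.

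The step I expect to require the most care is the passage from \emph{KKT point} to \emph{global maximizer}. This leans on concavity of $l$ over the convex feasible set to guarantee that any KKT point is globally optimal, and on strict concavity (from positive definiteness of $K$, as established in the Finite RSC/RSM lemma) to guarantee that the optimizer is unique; without uniqueness, exhibiting a KKT point would only produce \emph{some} maximizer rather than pinning down $\bzetaLj$ itself. A minor bookkeeping point worth stating explicitly is that the non-negativity constraints are affine, so no constraint qualification beyond this is needed for the KKT conditions to be necessary.
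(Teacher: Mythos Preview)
Your proposal is correct and follows essentially the same approach as the paper: both arguments verify that $\bzetaL$ (with its $j$-th coordinate equal to zero) satisfies the KKT system of the enlarged problem on $L\cup\{j\}$, and then invoke the fact that for a concave objective over a convex feasible set the KKT conditions are necessary and sufficient for optimality. The paper phrases this by reinterpreting the Lagrange multiplier for the equality constraint $\w_j=0$ in the $L$-problem as the KKT multiplier for the inequality $\w_j\ge 0$ in the $(L\cup\{j\})$-problem, whereas you check the first-order conditions coordinate-by-coordinate and additionally spell out the uniqueness argument via strict concavity; these are minor presentational differences rather than substantive ones.
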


\begin{theorem}[ProtoDash Guarantees]
\label{pd}
If $L^{D}$ is the $m$ sparse set selected by ProtoDash and $L^{\ast}$ is the optimal $m$ sparse set then,
\begin{equation}
f\left(L^D\right) \ge f\left(L^{\ast}\right) \left[1- e^{- \frac{c_{2m}}{\tilde{C}_1}} \right].
\end{equation}
\end{theorem}

\begin{corollary}
\label{cor1}
In ProtoDash, at each iteration, selecting the next prototype with the maximum gradient is equivalent to choosing a prototype that maximizes a tight lower bound on the function maximized by ProtoGreedy for its selection of the next prototype.
\end{corollary}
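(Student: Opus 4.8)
The plan is to compare, for each candidate index $j \notin L$, the quantity ProtoGreedy actually maximizes, namely $f(L\cup\{j\}) = \lbzetaLj$, against a surrogate that ProtoDash can evaluate cheaply from the current gradient. First I would recall that by definition $f(L\cup\{j\}) = \max_{\w:\,supp(\w)\in L\cup\{j\},\,\w\ge 0} l(\w)$, so \emph{any} single feasible $\w$ supported on $L\cup\{j\}$ furnishes a lower bound on it. The natural probe is to start from the current optimum $\bzetaL$ and move only along the newly available coordinate, i.e. set $\w = \bzetaL + t\,\bonej$ with $t\ge 0$, which is feasible because $\bzetaL_j = 0$.

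Next I would exploit that $l$ is exactly quadratic. Substituting $\w = \bzetaL + t\,\bonej$ into $l(\w) = \w^T\bmu_p - \tfrac12\w^T K\w$ and using $\nabla\lbzetaL = \bmu_p - K\bzetaL$ yields the exact scalar identity
\begin{equation*}
l\!\left(\bzetaL + t\,\bonej\right) = \lbzetaL + t\,\gradljbzetaL - \frac{t^2}{2}K_{j,j},
\end{equation*}
in which $\gradljbzetaL = \mu_{p,j} - K_{j,*}\bzetaL$ is precisely the gradient component $g_j$ used in Algorithm \ref{protodash}. Maximizing the right-hand side over $t\ge 0$ is a one-dimensional problem: by Lemma \ref{lem:KKTGradient} any index worth adding has $\gradljbzetaL > 0$, so the unconstrained optimizer $t^\ast = \gradljbzetaL / K_{j,j} > 0$ is feasible and gives
\begin{equation*}
f(L\cup\{j\}) = \lbzetaLj \ge \lbzetaL + \frac{\left(\gradljbzetaL\right)^2}{2\,K_{j,j}}.
\end{equation*}
This is the advertised lower bound on the ProtoGreedy selection objective. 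It is tight in the sense that it uses the \emph{exact} curvature $K_{j,j}$ of $l$ in the direction $\bonej$, so it dominates the generic estimate obtained from the one-coordinate smoothness constant $\tilde{C}_1 \ge K_{j,j}$; moreover equality with $f(L\cup\{j\})$ holds whenever re-optimizing over $L\cup\{j\}$ leaves the coordinates in $L$ unchanged.

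Finally I would maximize this bound over $j$. Since the additive term $\lbzetaL$ does not depend on $j$, maximizing the bound is equivalent to maximizing $\left(\gradljbzetaL\right)^2 / K_{j,j}$; combined with the normalized-kernel hypothesis ($K_{j,j}$ equal across $j$) and the sign guarantee $\gradljbzetaL > 0$ from Lemma \ref{lem:KKTGradient}, this collapses to $\argmax_j \gradljbzetaL$, which is exactly the rule executed by ProtoDash. The main obstacle I anticipate is the interaction with the non-negativity constraint: without Lemma \ref{lem:KKTGradient} I could not exclude $t^\ast < 0$, an infeasible step, and the clean closed form $\left(\gradljbzetaL\right)^2/(2K_{j,j})$ would break down — this is precisely where the present setting departs from the unconstrained analysis of \cite{weaksub}. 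A secondary, more cosmetic point is that reducing $\argmax_j \left(\gradljbzetaL\right)^2/K_{j,j}$ to $\argmax_j \gradljbzetaL$ relies on the diagonal of $K$ being constant, which I would state explicitly as the kernel-normalization assumption.
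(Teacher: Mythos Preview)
Your proposal is correct and essentially matches the paper's proof: both fix the $L$-coordinates at the current optimum $\bzetaL$ and optimize the single new coordinate, arriving at the identical lower bound $\lbzetaL + \left(\gradljbzetaL\right)^2/(2K_{j,j})$, and both invoke the kernel-normalization assumption $K_{j,j}\equiv 1$ to reduce the maximizer to $\argmax_j \gradljbzetaL$. The only cosmetic difference is that the paper reaches this bound by substituting out $w_j$ via its KKT value to define a reduced function $l_j(\w)$ on the $L$-coordinates and then evaluating at $\w^\ast$, whereas you do the equivalent one-dimensional maximization in $t$ directly; the paper's tightness condition ($\s_j^T\w^\ast \approx \s_j^T\w^j$) is phrased slightly more generally than yours but captures the same phenomenon.
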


\section{Time Complexity}
\label{sec:timecomplexity}
For both ProtoGreedy and ProtoDash we need to compute the mean inner product of $X^{(1)}$ for instances in $X^{(2)}$, which takes $O(n^{(1)}n^{(2)})$ time. The time complexity to compute inner products between points in data set $X^{(2)}$ to build the kernel matrix $K$ is $O(mn^{(2)})$.

For ProtoGreedy, the selection of the next best element requires running $O(n^{(2)})$ quadratic programs each taking $O(m^3)$. Hence the time required for choosing $m$ such next best elements is $O(m^4n^{(2)})$. The total time complexity of ProtoGreedy is $O\left(n^{(2)}(n^{(1)}+m^4)\right)$.
The $i^{th}$ iteration of ProtoDash requires a search over $(n^{(2)} - i +1)$ elements to determine the next best element. Computing gradient for each element searched is $O(i)$ as it involves computing inner products with the chosen $i-1$ prototypes. Hence the complexity of choosing $m$ such next best elements is $O(m^2n^{(2)})$. For each iteration $i$, we need to run one quadratic program needing $O(i^3)$ time to compute weights. Hence, overall it is $O(m^4)$. Consequently, the total time complexity for ProtoDash is $O\left(n^{(2)}(n^{(1)} + m^2) + m^4\right)$.

Given our motivation of interpretability which requires concise data summarization where typically $m<<n^{(2)}$, ProtoDash will be significantly faster than ProtoGreedy as witnessed in our experiments.

\begin{figure*}[t]
  \begin{center}
      \includegraphics[width=0.45\linewidth]{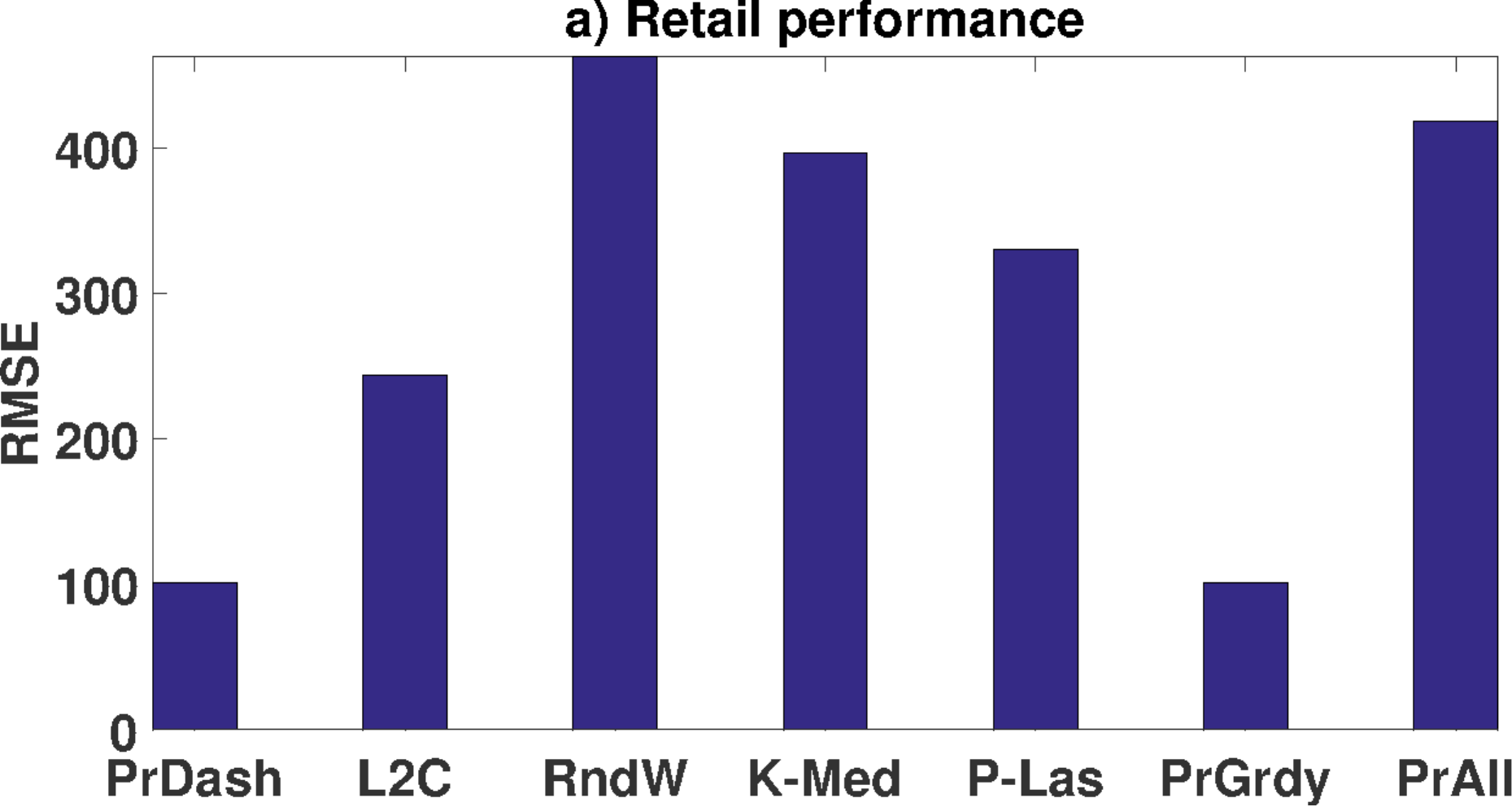}
      \includegraphics[width=0.45\linewidth]{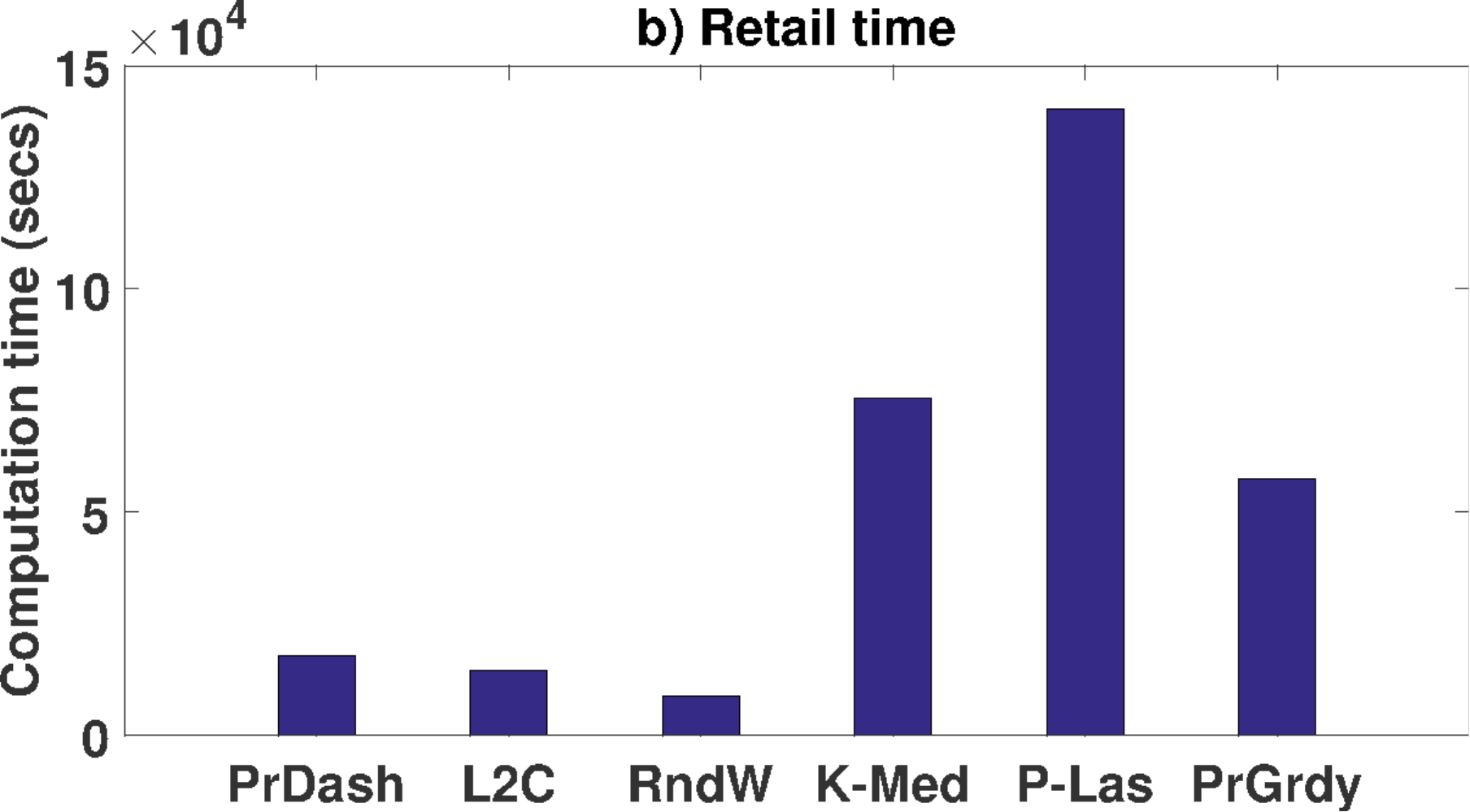} 
  \end{center}
  \caption{We observe the quantitative results of the different methods on retail dataset.}
  \label{retail}
\end{figure*}

\begin{figure}[t]
  \begin{center}
    \begin{tabular}{c}
      \includegraphics[width=0.45\textwidth, height=0.2\textheight]{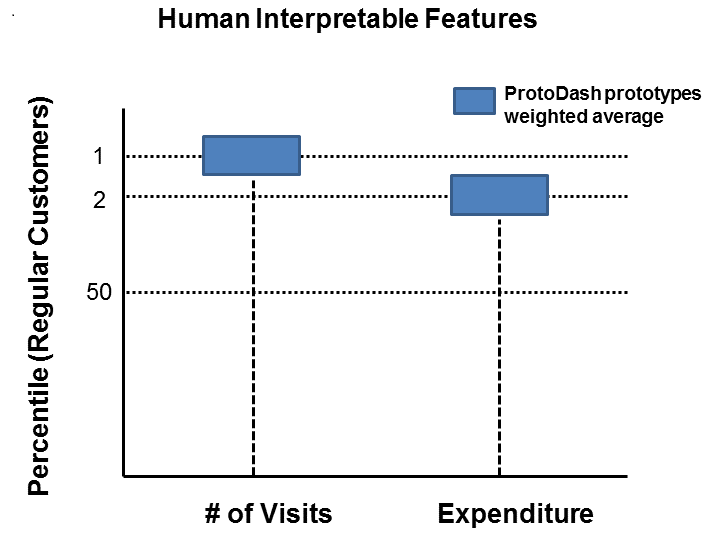}
     \end{tabular}
  \end{center}
  \caption{We see that the weighted average of our prototypes selected from the regular customers group that best fit the loyalty group are in the top 1 and top 2 percentile respectively for two important interpretable features, \# of visits and expenditure.}
  \label{retailQF}
\end{figure}

\section{Experiments}
In this section we quantitatively as well as qualitatively validate our algorithms on three diverse domains. The first is a dataset from a large retailer. The second is MNIST which is a handwritten digit dataset. The third are 40 health questionnaires obtained from the CDC website.

We compare ProtoDash (or PrDash) with five other methods. The first is our slower but potentially better performing greedy method ProtoGreedy (or PrGrdy). The second is L2C. The third is P-Lasso (or P-Las), i.e., lasso with the non-negativity constraint \cite{classo}. The P-Lasso objective in our setting is the equivalent Ivanov formulation of standard Lasso \cite{Ivanov76} where we maximize $l(\w)\text{ subject to }||\w||_1 \le \epsilon, \w \ge0$
where for every choice of $\lambda$ in Lasso, there is an equivalent $\epsilon$ in the Ivanov formulation \cite{Oneto16} \footnote{For P-Lasso we try to find the regularization parameter $\epsilon$ that gives roughly (within 10\%) the same number of prototypes as the other methods. This was not always possible.}. The running time of P-Lasso is $O(n^{(1)}n^{(2)} + {n^{(2)}}^3)$, making it much more expensive than ProtoDash as we will see in the experiments. The fourth is K-Medoids (or K-Med) \cite{sproto}. The fifth is RandomW (or RndW), where prototypes are selected randomly, but the weights are computed based on our strategy.\eat{We use abridged versions of their names in the figures for readability. The association is obvious but to be clear ProtoDash is PrDash, RandomW is RndW, K-Medoids is K-Med, P-Lasso is P-Las and ProtoGreedy is PrGrdy.} ProtoGreedy's and ProtoDash's superior performance to this baseline as well as to L2C implies that selecting high quality prototypes in conjunction with determining their weights are important for obtaining state-of-the-art results and that neither of these strategies suffices in isolation. We use Gaussian kernel in all the experiments. The kernel width is chosen by cross-validation. 

Additional experimental results on MNIST where we initially oversample to pick $2m$ and $3m$ prototypes and among them choose the top $m$ based on the magnitude of the learned weights are given in Figure \ref{mnistinit}c. This is another benefit of learning the weights where we can first oversample and then choose the desired number of prototypes from this set that have the largest weights. This often leads to superior results.

\subsection{Retail}
The first dataset we consider is a proprietary E-commerce dataset from a large retailer. We have 2 years of online customer data from the beginning of 2015 to the end of 2016. This is information of roughly 80 million customers. Around 2 million of which are loyalty customers and we know of 9878 customers who were regular customers in 2015 but became loyalty in 2016.

The goal is to accurately predict the total expenditure of a customer and to evaluate if being a loyalty or a regular customer has any effect on his behavior independent of factors such as the number of online visits, his geo or zip, average time per visit, average number of pages viewed per visit, brand affinities, color and finish affinities, which are the attributes in the dataset.

To answer this counter-factual we build a SVM-RBF \cite{libsvm} model using 10-fold cross-validation on the 2016 data and evaluate its performance on the 2015 data for the 9878 customers that were among the loyalty group in 2016 but not in 2015. In essence we test our model by evaluating how accurately we predict the expenditure of these 9878 customers in 2015, with a model that is built using the 2016 data.

The 2016 data that we use to train the SVM-RBF depends on the prototype selection methods. The entire loyalty group is always part of the training. The question is identify the subset of the regular customers we should also add to training. For our methods we choose prototypes from the regular customer base that best represent the loyalty group. We select around 1.5 million customers because the improvement in objective is incremental beyond this point. We select the same number of prototypes for the competing methods. For this experiment we have an additional baseline which is training using all the data referred to as PrAll. 

\noindent\textbf{Quantitative Evaluation:}
In Figure \ref{retail}a, we observe the root mean squared errors (RMSE) of the different methods. We see that our methods are significantly better than the competitors. Using all the data is not a good idea probably due to the high size imbalance between the two groups. We also observe that ProtoDash is almost as good as ProtoGreedy. In Figure \ref{retail}b, we note the running time of the different prototype selection methods. Here we see that ProtoDash is close in running time to L2C and over 3 times faster than ProtoGreedy. Hence, from Figures \ref{retail}a and \ref{retail}b we can conclude that ProtoDash would be the method of choice for this application.

\noindent\textbf{Qualitative Evaluation:}
We did further investigation of our prototypes w.r.t. features that experts consider important. We found that our prototype group had high number of visits i.e. based on our weights the (weighted) average number of visits for this group was in the top 1\% of the visits by regular customers and they also had relatively high expenditure, i.e. the weighted average was in the top 2\% in this group. This is shown in Figure \ref{retailQF}.

\emph{The even more reassuring fact was that when we shared with the domain experts the top 100 prototypes based on our weights of the 1.5 million regular customers that were selected as prototypical of the loyalty group in 2016, they confirmed that 83 of those 100 became loyalty customers in 2017.}

\begin{figure*}[t]
  \begin{center}
    \includegraphics[width=0.32\linewidth, height=2.9cm]{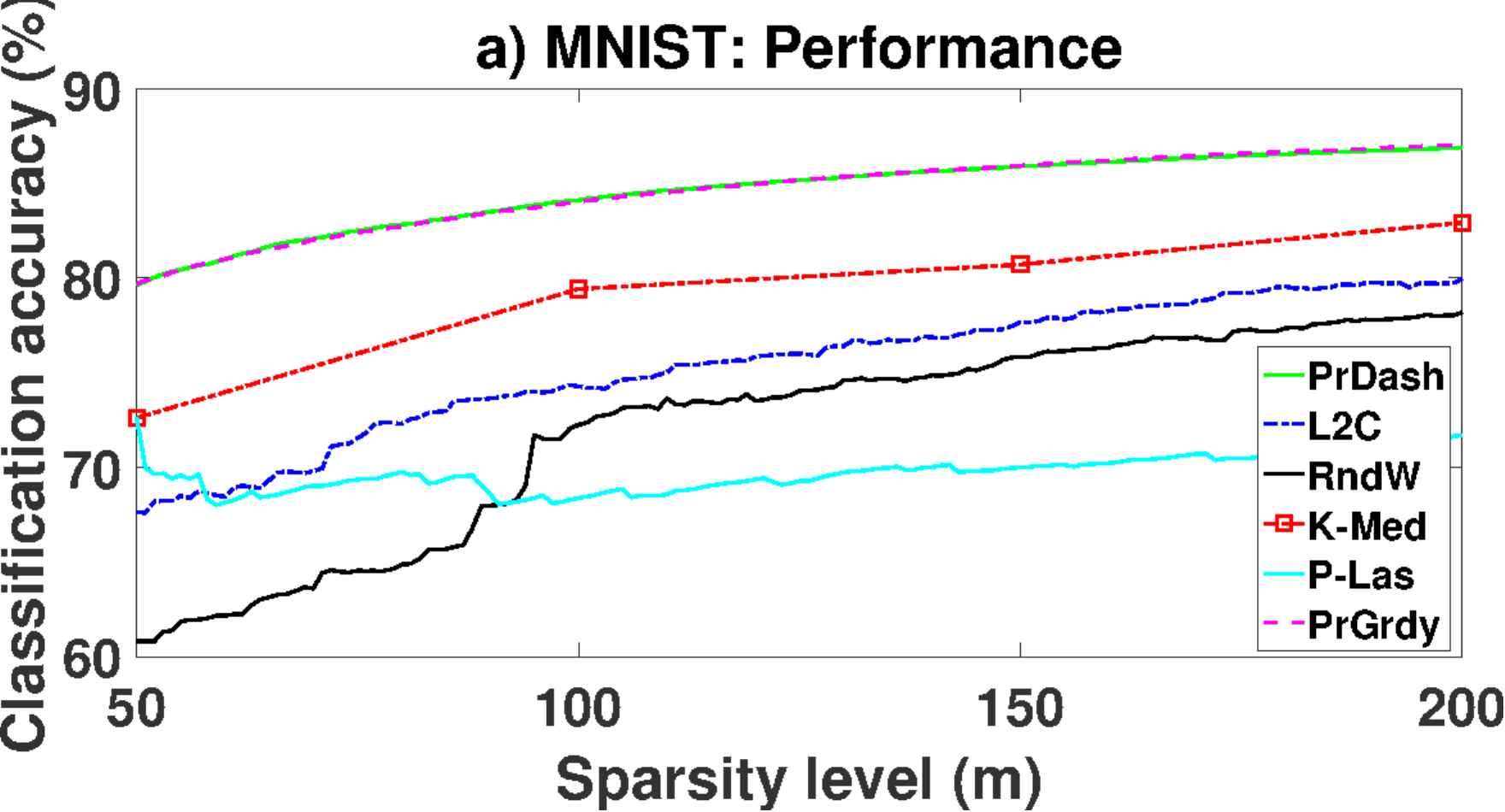}
      \includegraphics[width=0.32\linewidth]{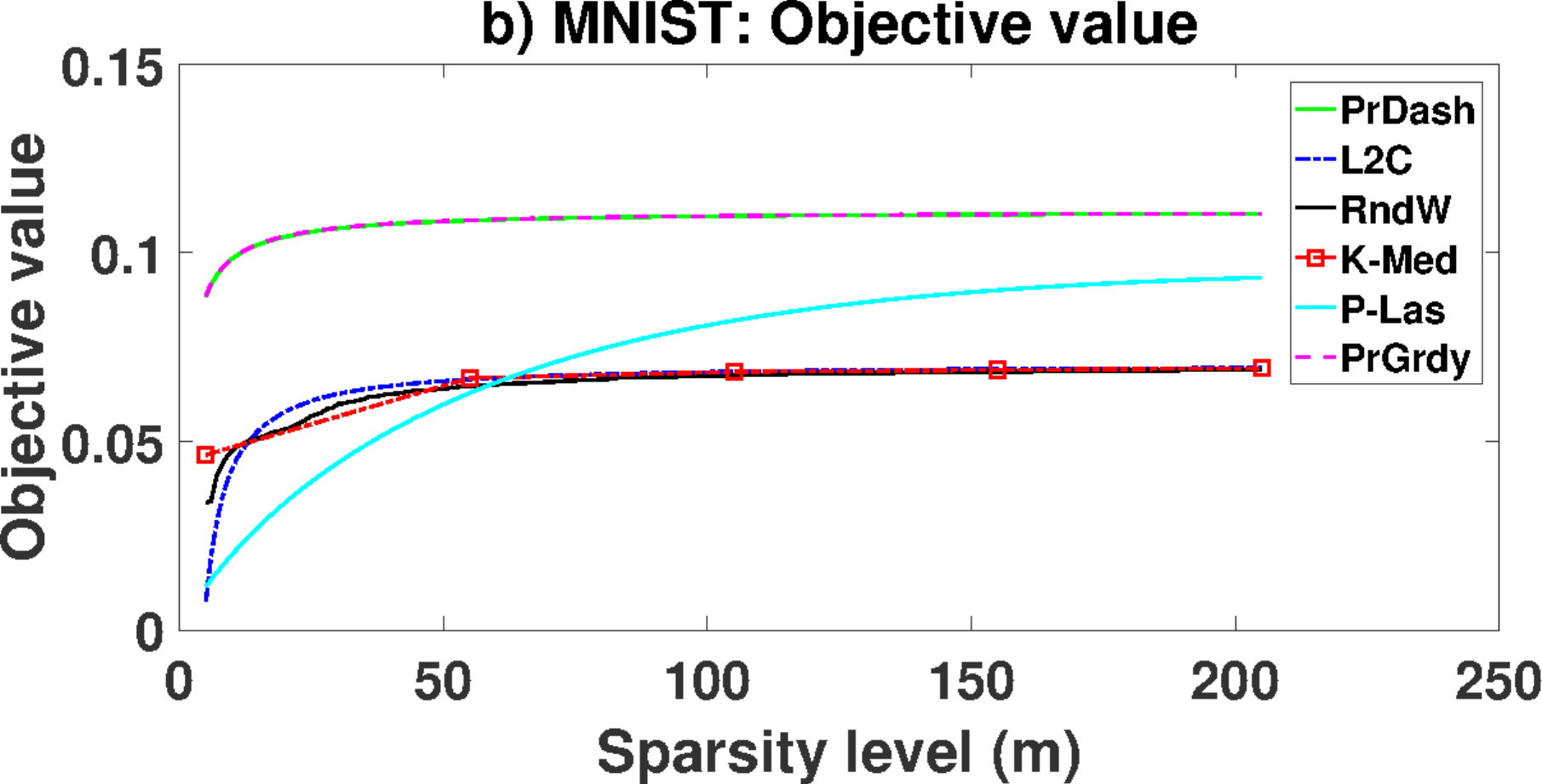}
      \includegraphics[width=0.32\linewidth]{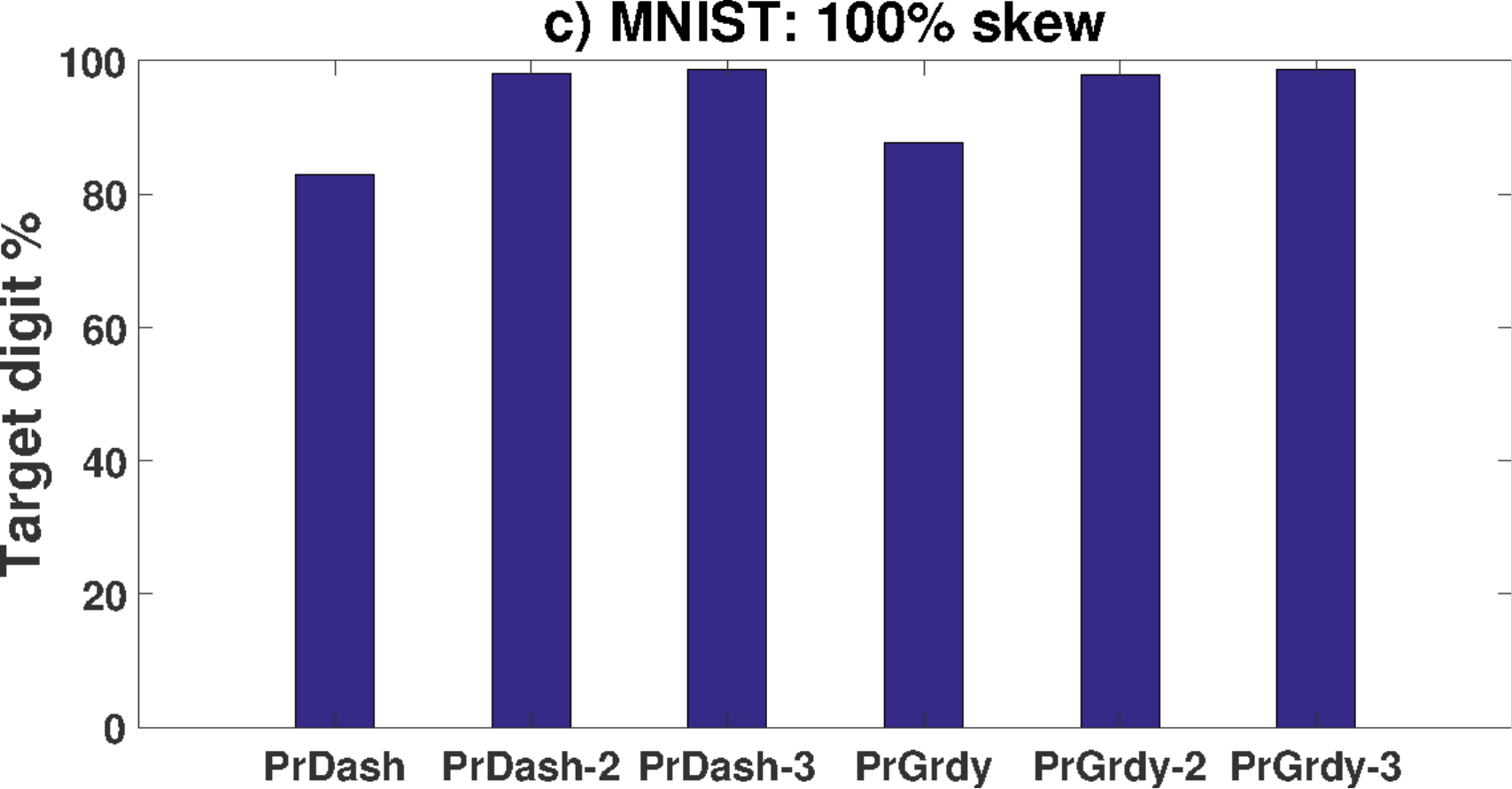}       
  \end{center}
  \caption{In a) we observe the (averaged) source dataset classification accuracy values for different number of prototypes. In b) we observe the (averaged) $l(.)$ value for different number of prototypes. In c) we observe the \% of the target digit selected from the source dataset by our methods based on our learned (highest) weights at 100\% target skew for different oversampling ratios (i.e. choosing highest weighted $m$ prototypes from $2m$ and $3m$ selected prototypes).}
  \label{mnistinit}
\end{figure*}

\begin{figure*}[t]
  \begin{center}
      \includegraphics[width=0.32\linewidth]{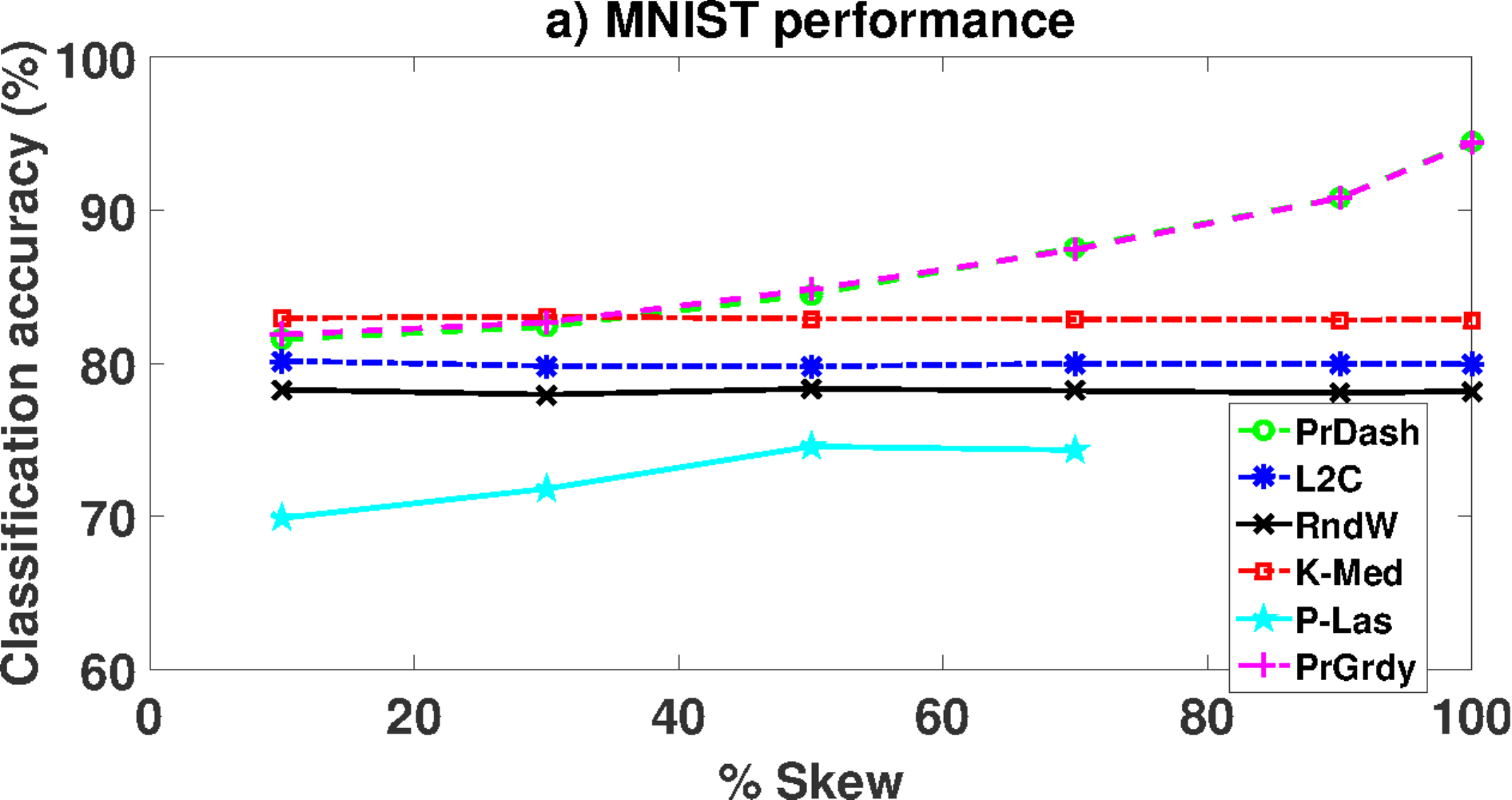}      \includegraphics[width=0.32\linewidth]{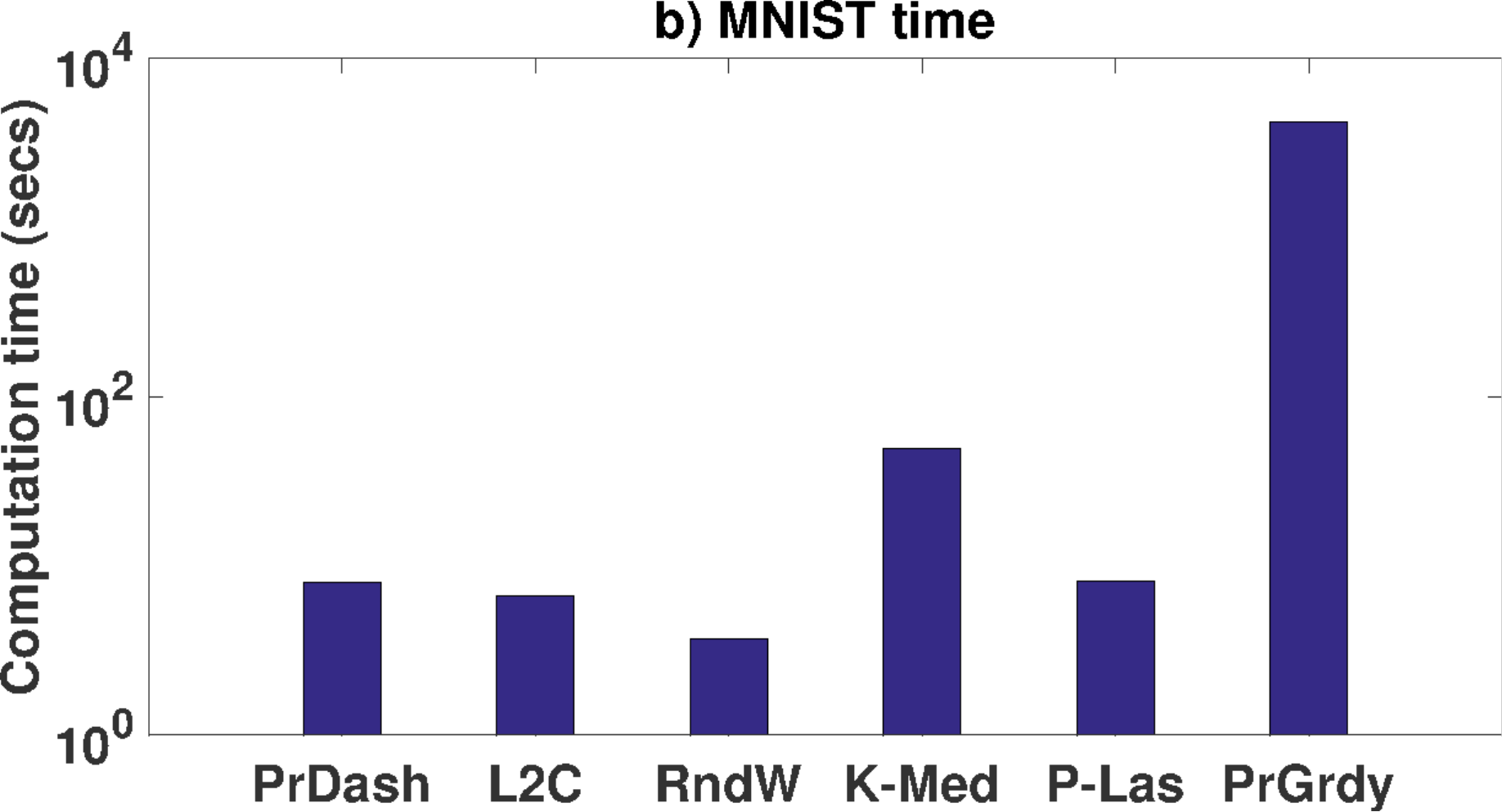} 
      \includegraphics[width=0.32\linewidth]{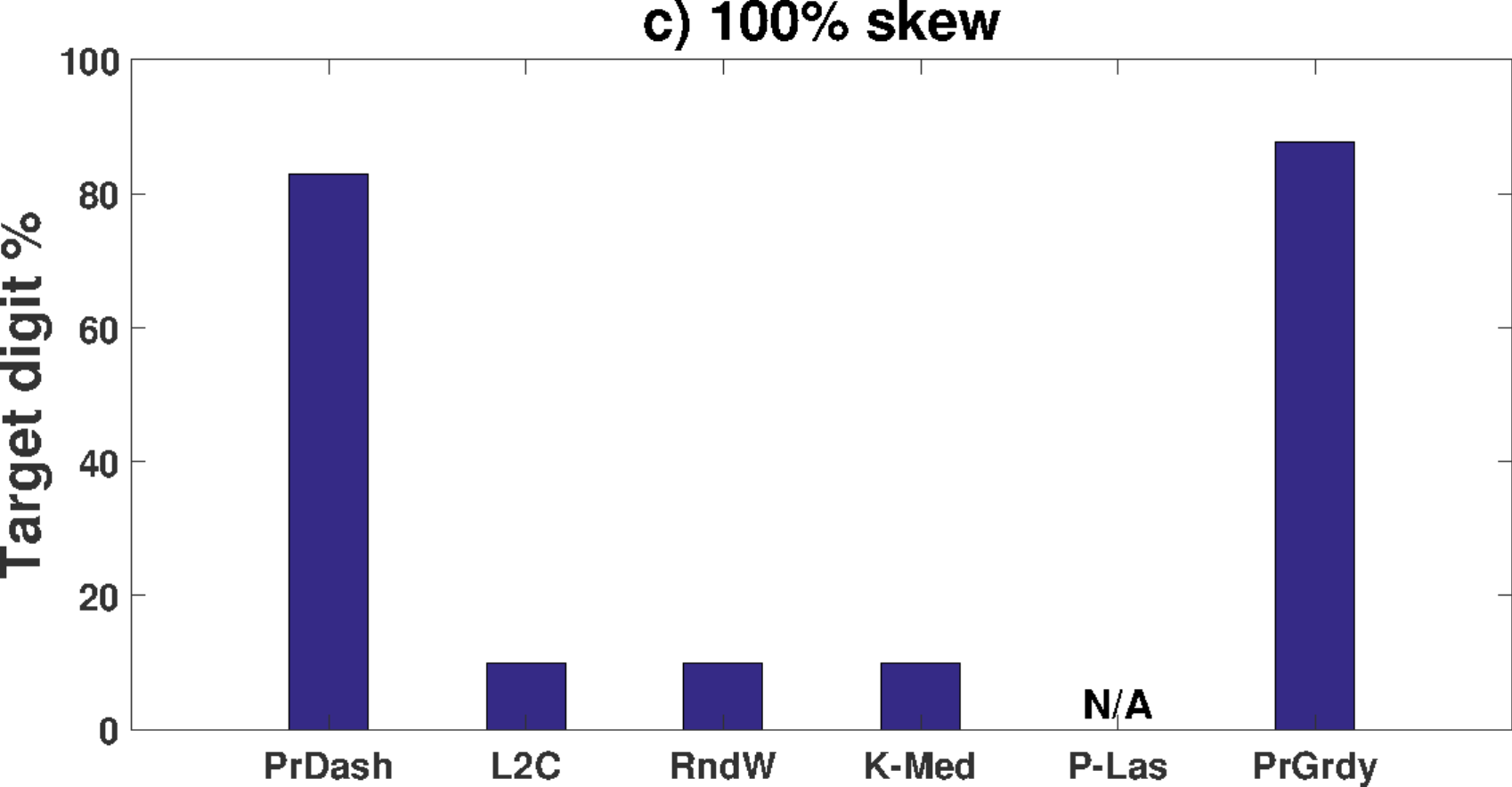}
  \end{center}
  \caption{We observe the quantitative results of the different methods on MNIST.}
  \label{mnist}
\end{figure*}

\begin{figure*}[t]
  \begin{center}
    \includegraphics[width=0.32\linewidth,height=2.9cm]{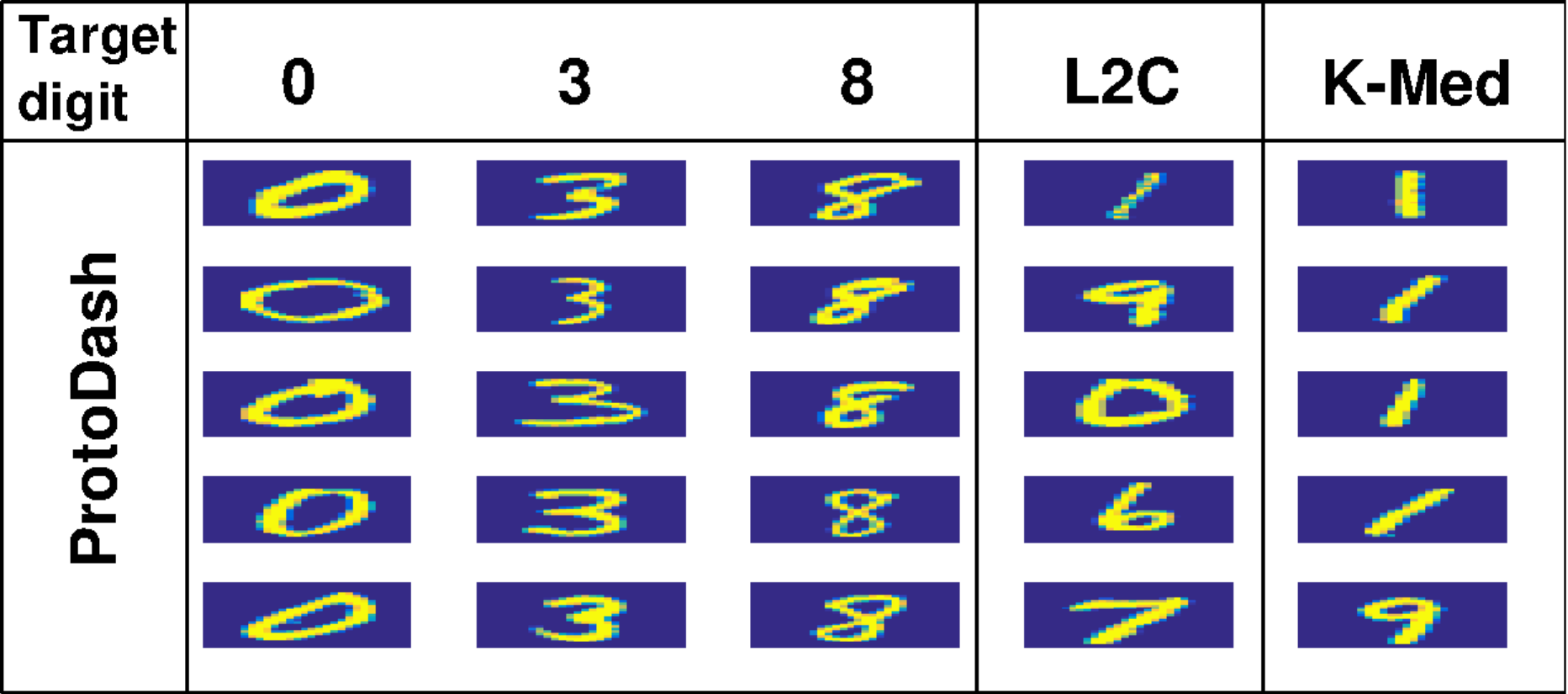}
      \includegraphics[width=0.32\linewidth]{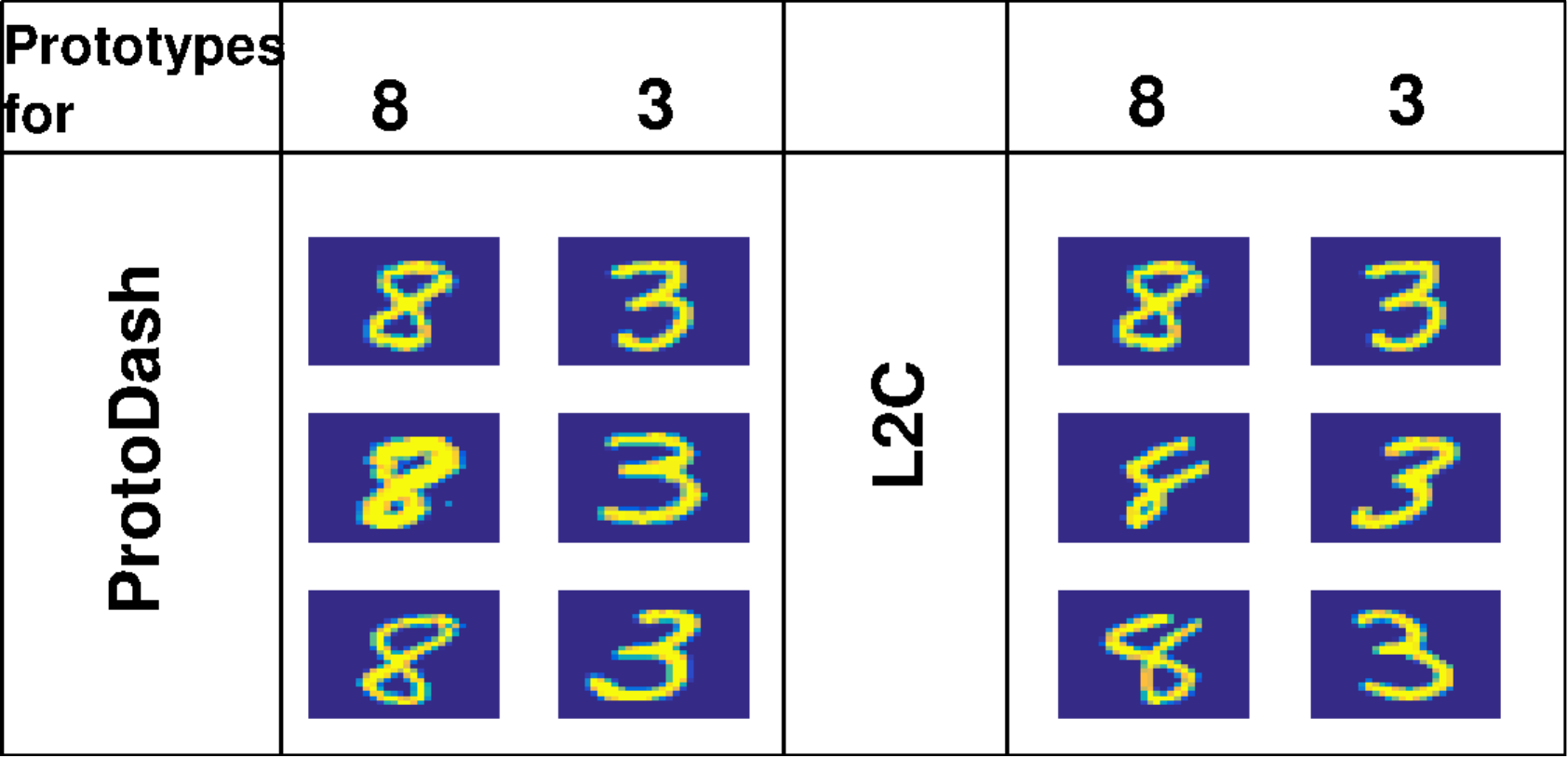} 
      \includegraphics[width=0.32\linewidth]{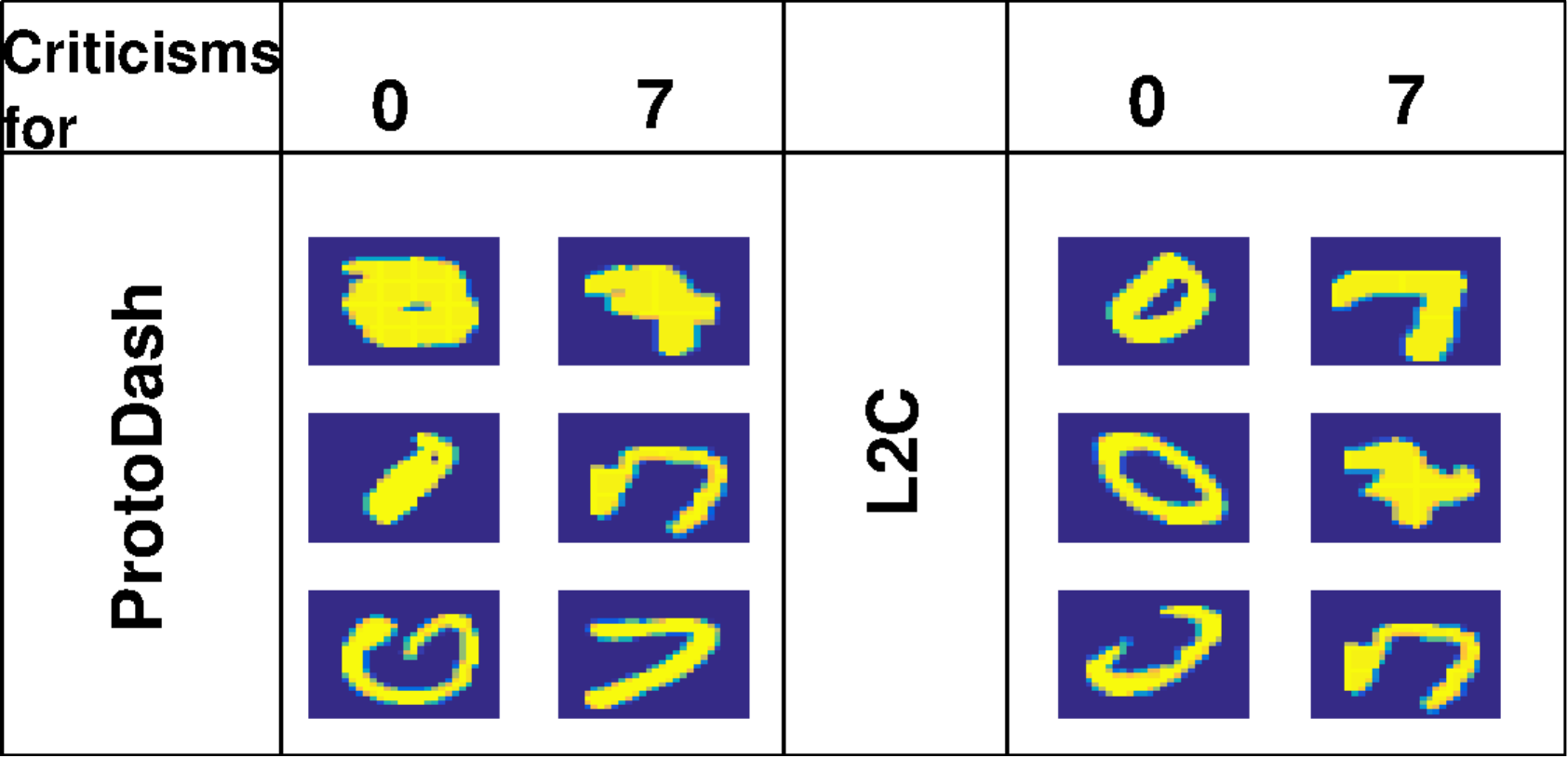}       
  \end{center}
  \caption{We observe the qualitative (or visually discernible) results for ProtoDash and L2C above. The ordering from best to worst candidates for the specific task is top to down. Our ordering is determined by our learned weights.}
  \label{mnistql}
\end{figure*}

\subsection{MNIST}
On MNIST dataset we design experiments to validate the adaptability of our approach in terms of how well we are able to select prototypes (and criticisms) from a source dataset ($X^{(2)}$) that best match the distribution of a different target dataset ($X^{(1)}$). This has potential applications in transfer learning, multitask learning and covariate shift correction.

We employ the (global) one nearest neighbor (1-NN) prototype classifier as described in \cite{Kim16}. For our methods, since our learned weights and the distance metric in 1-NN classification are not on the same scale, we used the weights to select the top $m$ prototypes and then based on these performed 1-NN classification. To obtain more robust results especially as we skew the target distribution towards a single digit, we use the MNIST training set to form multiple target datasets, while the MNIST test set is used as the source dataset. In particular, we use the MNIST training set of 60000 images to form multiple target sets of size 5420, which is the cardinality of least frequent digit in this set. We then randomly select 1500 images from the original MNIST test set to form our source dataset. The first target set we form is representative of the population and contains an equal number (i.e. $\sim$ 10\%) of all the digits. We now create skewed target sets for percentages of $s=30, 50, 70, 90$ and $100$. For each value of $s$ we create 10 target sets where a particular digit is $s$ fraction of the target set and the remaining portion of target set contains representative population of the other digits. For example, when $s=70$ one of the test sets will have 70\% 0s and the remaining 30\% is shared equally by the other 9 digits. By averaging our results for each $s$ we can observe the performance of the different methods for varying levels of skew. The reported results are over 100 such re-sampling trials.
\begin{figure*}[htbp]
  \begin{center}
      \includegraphics[width=0.32\linewidth]{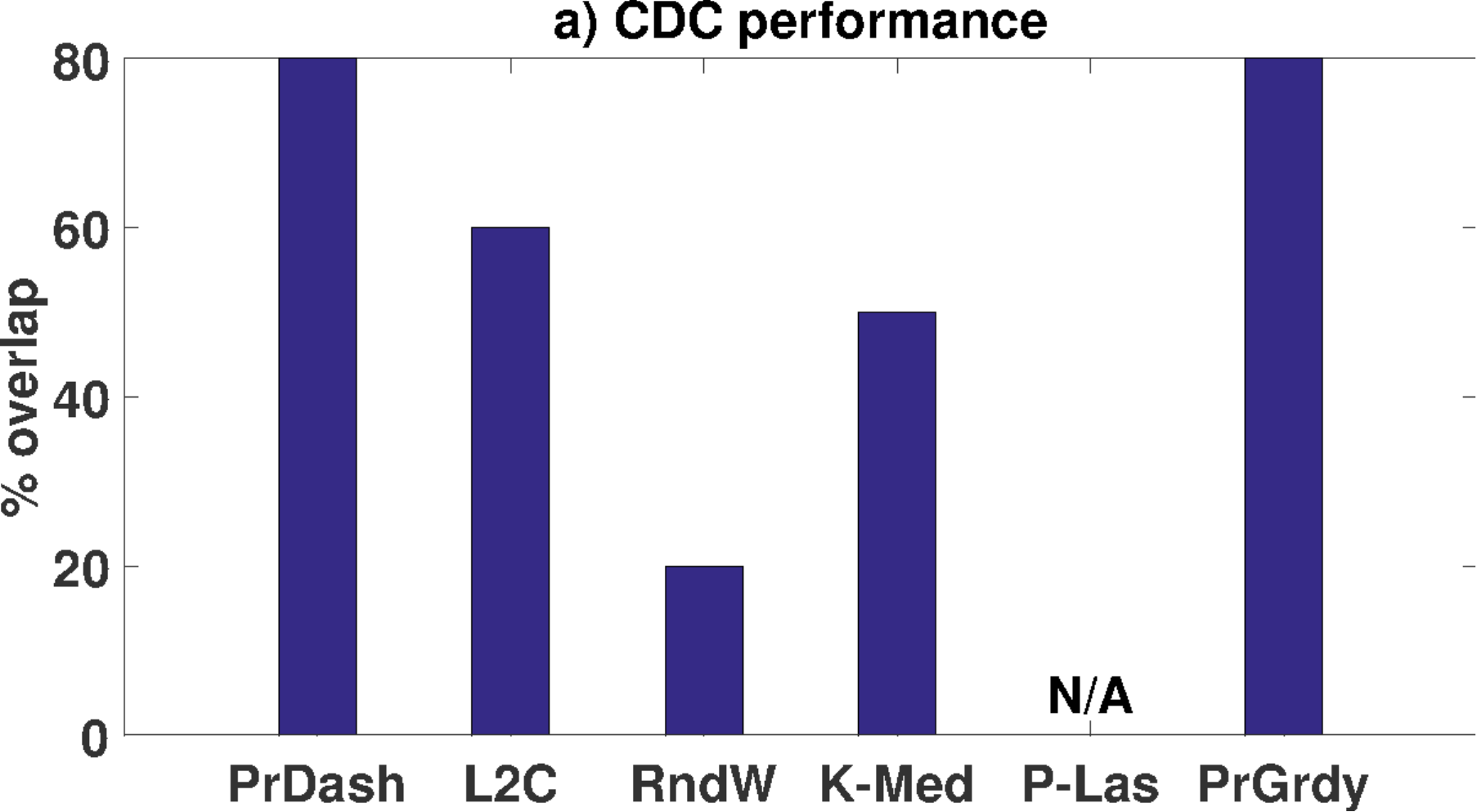} 
      \includegraphics[width=0.32\linewidth]{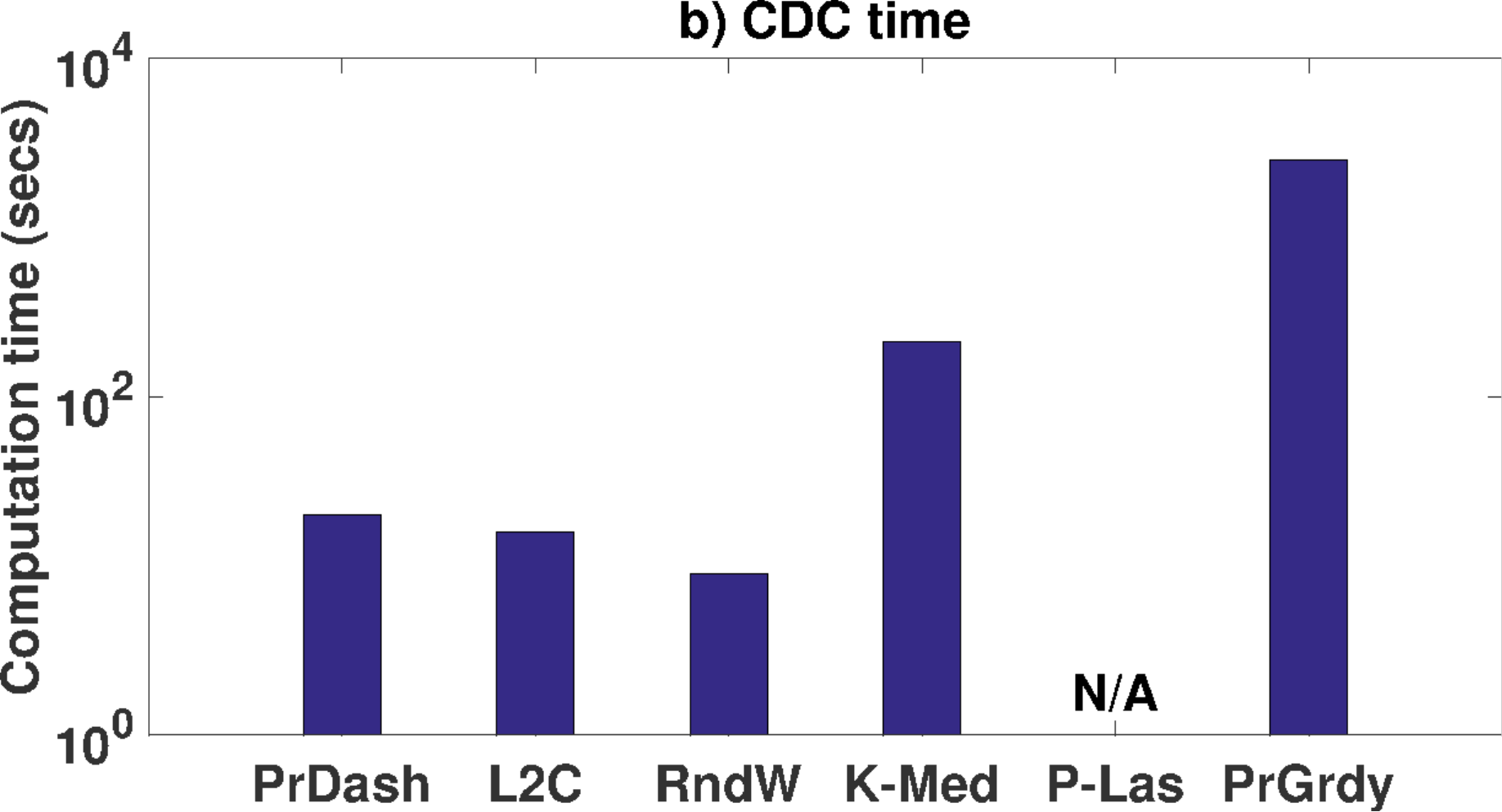} 
      \includegraphics[width=0.34\linewidth]{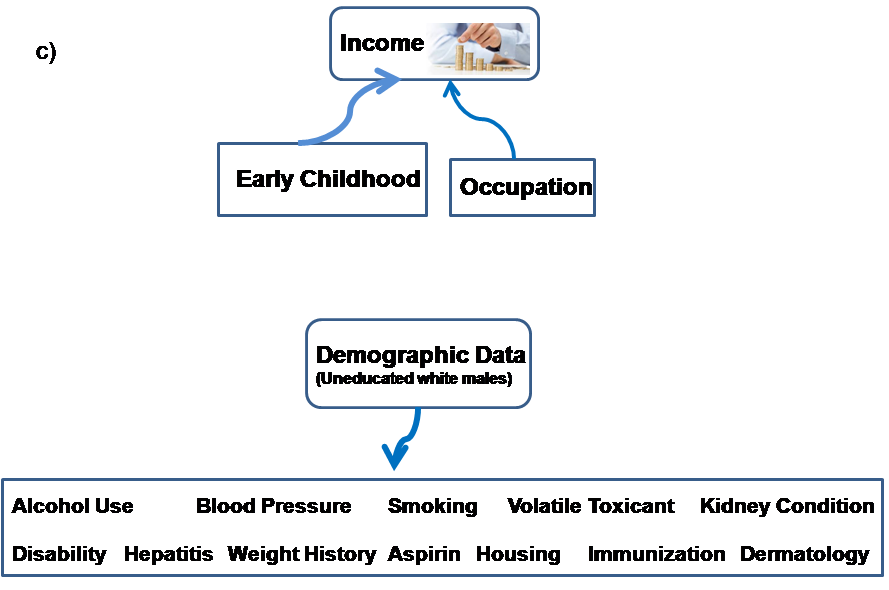}
  \end{center}
  \caption{Above are the quantitative and qualitative results on the CDC Questionnaires.}
  \label{cdc}
\end{figure*}

\noindent\textbf{Quantitative Evaluation:}
If we average over all percentages $s$ and plot the classification accuracy values as well as our objective for different levels of sparsity as seen in Figures \ref{mnistinit}a and \ref{mnistinit}b, we find that around $m=200$, the gain in objective is incremental. We thus choose 200 prototypes for all the methods. In Figure \ref{mnist}a, we see that the performance of the closest competitors on the target set does not adapt to skew. Our methods are a little worse than K-Medoids initially but their performance drastically improves as the skew increases. \emph{This scenario shows the true power of our methods in being able to adapt to non-representative target distributions that are significantly different than the source.} Additionally, the performance of ProtoDash again is indistinguishable from ProtoGreedy. In Figure \ref{mnist}b, we see that ProtoDash is orders of magnitude faster than K-Medoids and ProtoGreedy which are its closest competitors.

To further understand why our methods so significantly outperform the others at high skews of the target set, in Figure \ref{mnist}c, we report the percentages of the target digit picked by the different methods from the source set averaged over the 10 digits at 100\% target skew (i.e. target dataset contains copies of only a single digit). We see that our methods adapt swiftly by picking almost exclusively images of the target digit from the source dataset with all the weight concentrated on them. The weighting is further justified when we look at Figure \ref{mnistinit}c where we oversample the prototypes and then choose the desired number $m$ from this based on highest weights. PrDash and PrGrdy are results from just choosing $m$
prototypes. However, PrDash-2 and PrDash-3 are results when we first select $2m$ and $3m$ prototypes and among them pick the highest weighted $m$ prototypes. We underscore here that by oversampling, we are able to select more prototypes that match the target digit. The same applies to PrGrdy. This showcases another benefit of learning non-negative weights where we can first oversample and then use the weights to clearly identify important, top prototypes leading to even better results.

\noindent\textbf{Visual Evaluation:}
When we have 100\% (target) skew, we see in Figure \ref{mnistql} (left) that our method exclusively picks the target digit from the source dataset such as 0s or 3s or 8s, while L2C and K-Medoids pick a set that is independent of the target digit. This is a visual illustration of the result reported in Figure \ref{mnist}c. We also wanted to measure the quality of the prototypes and criticisms for a specific digit in the data set without trying to fit any target distribution, i.e. $X^{(1)}=X^{(2)}=$ samples of a specific digit in the data. In Figure \ref{mnistql} (center) we observe the top three prototypes for digits 8 and 3 for ProtoDash and L2C. Our prototypes, which were \emph{selected based on weights}, look visually more appealing where the 8s for instance are \emph{complete with no broken curves} which is not the case for the second and third best prototype of L2C. For criticisms too in Figure \ref{mnistql} (right), which are the farthest away examples from the prototypes, we see that our criticisms seem to be better. For instance, our 0s look visually much worse than those selected by L2C. Also our ordering of "bad" 7s seems to be much better.

\subsection{CDC Questionnaires Case Study}
The US dept. of health conducts surveys consisting of 10s of questionnaires sent to over thousands of people every couple of years. This is a rich repository of anonymized human health facts that are publicly available. We in this study use the health questionnaires collected over the 2013-2014 period \cite{Nhanes}. There are 43 questionnaires (viz. \emph{Alcohol Use, Occupation, Income, Early Childhood, Depression, Diet}) of which we use 40 (3 had data issues).\eat{ and which are listed in the appendix.} Note that the questionnaires are answered by the same set of 5924 people, so in essence, we can view this setup as having 40 datasets over the same examples but with different features where the features correspond to questions in the questionnaires.

An expert in public health wanted to see: task 1) if we could rank order the questionnaires based on some measure of importance so that henceforth they could potentially send fewer questionnaires for people to fill, and task 2) if for a given questionnaire we could find amid the other 39 questionnaires the one that is most representative of it. Such insight could lead to early interventions that can potentially save lives.

We attacked both problems with our prototype selection framework. In fact, accomplishing the 2$^{nd}$ task is a big step in resolving the 1$^{st}$. For each questionnaire $Q_i\in Q$ where $Q=\{Q_1,...,Q_{40}\}$ we found prototypes ($m=10$) after which the improvement in objective (eq.~\ref{def:f}) was incremental. We then evaluated the quality of these prototypes on the other 39 questionnaires based on the same objective in (\ref{def:f}). Thus, for a particular $Q_i$ we rank ordered the other $Q/Q_i$ based on our objective value. Ergo, the rank $r_{ij}$ signifies how well the prototypes of a questionnaire $Q_j$ represents $Q_i$. This resolves the task no. 2 above. Note that the rank is not commutative and hence graphically it can be viewed as a directed graph. To answer task no. 1, for each $Q_j$ we found its average rank i.e. $r_j=\frac{1}{39}\sum_{i\in\{1,...,40\},i\neq j}r_{ij}$ across other questionnaires and sorted the $r_j$s in ascending order. Thus, lower the $r_j$ more important the questionnaire.

\noindent\textbf{Human Expert based Evaluation:}
We obtained from the expert a list of 10 questionnaires he thought would be most important of the 40. We intersected this list with the top 10 by the different methods and report the overlap percentage in Figure \ref{cdc}a. P-Lasso didn't produce results possibly due to bad condition number on most datasets so we omitted it. We see that our methods have the largest overlap and thus have the most agreement with the expert. We also again see in Figure \ref{cdc}b the efficiency of ProtoDash.

\noindent\textbf{Insights Matching Scientific Studies:}
We tried to validate some of the rankings we got from task no. 2 based on prior studies. The insights are depicted in Figure \ref{cdc}c. We found that for the Income questionnaire its best representative prototypes, came from the Early Childhood questionnaire which has information about the environment in which the child was born. The second best questionnaire was Occupation. Occupation is intuitive to understand as affecting income. However, Early childhood is interesting and the expert mentioned that there is validation of this based on a recent study which talks about significant decrease in social mobility in recent years \cite{poor20}. The ranking of other methods differed with no such justification.

We also analyzed the Demographic data questionnaire from the same year in terms of how it fared with representing the 40 questionnaires. It turned out that it was the top ranked for multiple questionnaires as shown in Figure \ref{cdc}c, which are indicative of high stress levels due to health issues or financial condition.\eat{ Alcohol Use, Blood Pressure \& Cholesterol, Smoking - Household Smokers, Smoking - Secondhand Smoke Exposure, Volatile Toxicant, Weight History - Youth, Disability, Preventive Aspirin Use, Kidney Conditions - Urology, Hepatitis, Housing Characteristics, Immunization and  Dermatology.} Some of the most common highest weighted prototypes were white Americans with education levels that were at best AA. This is highly consistent with the recent study \cite{dod} which shows that the death toll among middle aged uneducated white Americans is on the rise due to financial and health related stresses.
\vspace{-2mm}
\section{Discussion}
In this paper we provided a fast prototype selection method ProtoDash. We derived approximation guarantees for it and showed in the experiments that its performance is as good as the standard greedy version ProtoGreedy but computationally much faster. Learning non-negative weights and its ability to find prototypes across datasets leads to its superior performance over L2C and other competitors, while still outputting results that are insightful and easy to evaluate. 

In the future, it would be interesting to further close the theoretical gap between ProtoDash and ProtoGreedy maybe based on the ideas in \cite{dg}, although it is not clear if they would generalize to our setting. The other extension may be to obtain a convex combination of weights in addition to non-negativity. In terms of practical applications, we are in the process of further studying how demographics and other behavioral traits relate to statistics on increased mortality rates \cite{dod}, which has been a major concern in the recent decades. Our prototype selection methods could also have applications in transfer learning and lifelong learning applications, where one can use the prototypes to efficiently and accurately learn models for new tasks. We plan to explore such avenues in the future.
\appendix
\subsection{Proof of Lemma~\ref{mono}}
Let $|L_1| = n_1$ and $|L_2|=n_2$ and $n_1 \leq n_2$. Index the elements in $L_2$ such that the first $n_1$ elements are those contained in $L_1$.
Then,
\begin{equation*}
\begin{split}
f\left(L_2\right) &= \max\limits_{\w: supp(\w) \in L_2, w_j \geq 0} l\left(\w\right) 
      \geq \max\limits_{\w: supp(\w) \in L_1, w_j \geq 0} l\left(\w\right)\\&= f\left(L_1\right).
      \end{split}
\end{equation*}

\subsection{Proof of Lemma~\ref{lemma:RSCRSM}}
For the concave function $l(\w) = -\frac{1}{2} \w^TK\w + \w^T\bmu_p$, we calculate $l(\w_1)-l(\w_2) - \nabla  \langle l(\w_2), \w_1-\w_2\rangle = -0.5(\w_1-\w_2)^TK(\w_1-\w_2)$. If $\w_1$ and $\w_2$ are $k_1$ and $k_2$ sparse vectors respectively, then $\Delta \w = \w_1-\w_2$ has a \emph{maximum} of $k \leq k_1+k_2$ non-zero entries. For the constants $c$ and $C$ satisfying $-c\|\Delta \w\|^2 \ge -\Delta\w^TK\Delta \w \ge -C \|\Delta \w\|^2$ we obtain the bounds: $c\ge k$-sparse smallest eigenvalue of $K$ and $C\le k$-sparse largest eigenvalue of $K$. In particular, when $supp(w_2) \subset supp(w_1)$, $\left\|\Delta \w\right\|_0 \leq k_1$ providing tighter bounds for $c$ and $C$.

\subsection{Proof of Theorem~\ref{ws}}
We lower bound the numerator and upper bound the denominator. Let $\bar{m} = |L|+|S|$. Recall that $\bzetaL,    \bzeta^{(L \cup S)} \in \R^{b^+}$ are the maximizer $\lbzetaL=f(L)$ and $\lbzetaLS= f(L\cup S)$ respectively. By the definition of $RSC$ and $RSM$ constants we find
\begin{align*}
\frac{c_{\bar{m}}}{2} \left\|\bzetaLS-\bzetaL\right\|^2 \leq &\lbzetaL - \lbzetaLS \\
& + \left\langle\nabla \lbzetaL, \bzetaLS-\bzetaL \right\rangle.
\end{align*}
Noting that $f$ is monotone for increasing supports we get
\begin{align}
&\lbzetaLS-\lbzetaL \\ &\leq \left\langle\nabla \lbzetaL, \bzetaLS-\bzetaL \right\rangle- \frac{c_{\bar{m}}}{2} \left\|\bzetaLS-\bzetaL\right\|^2 \nonumber \\
\label{eq:lowerbound}
& \leq \max\limits_{\bv: \bv_{(L\cup S)^c}=0, \bv >=0} \left\langle\nabla \lbzetaL, \bv-\bzetaL \right\rangle - \frac{c_{\bar{m}}}{2} \left\|\bv-\bzetaL\right\|^2.
\end{align}  
The vector $\bv$ with the support restricted to the coordinates specified by $L \cup S$ attains maximum at
\begin{equation*}
\bv_{L\cup S} = \max \left\{ \frac{1}{c_{\bar{m}}} \nabla l_{L \cup S}\left(\bzetaL \right) + \bzetaL_{L \cup S}, \mathbf{0} \right \}.
\end{equation*}
It then follows
\begin{equation*}
(\bv-\bzetaL)_{L\cup S} = \max \left\{ \frac{1}{c_{\bar{m}}} \nabla l_{L \cup S}\left(\bzetaL \right), - \bzetaL_{L \cup S} \right \}.
\end{equation*}

The KKT conditions at the optimum $\bzetaL$ for the function $f(L)$ necessitates that $\forall j \in L$,
\begin{align*}
\bzetaL_j > 0 &\implies \gradljbzetaL = 0, \\
\bzetaL_j = 0 &\implies \gradljbzetaL \leq 0
\end{align*}
and hence we have $(\bv-\bzetaL)_{j} = 0$. Further, for $j \in S$, $\bzetaL_j=0$ implying that $(\bv-\bzetaL)_{j} =  \max \left\{ \frac{1}{c_{\bar{m}}} \gradljbzetaL, 0 \right \}$. Defining $\nabla l_{S}^+\left(\bzetaL \right) = \max \left\{\nabla l_{S}\left(\bzetaL \right),\mathbf{0} \right\}$ and plugging the quantities computed at the maximum value $\bv$ in (\ref{eq:lowerbound}) we get the bound
\begin{equation}
\label{eq:lowerBfinal}
0 \leq \lbzetaLS-\lbzetaL  \leq \frac{1}{2 c_{\bar{m}}} \left\|\nabla l_{S}^+\left(\bzetaL \right) \right\|^2.
\end{equation}

To lower bound the numerator, consider a single coordinate $j \in S$. It suffices to restrict to those coordinates $j$ where $\gradljbzetaL > 0$. Otherwise, by Lemma~4.4 $f\left(L \cup \{j\} \right) = f(L)$. Let $\bonej$ be a vector with a value one only at the $j^{th}$ coordinates and zero elsewhere. For a $\alpha \geq 0$, define $\byj = \bzetaL + \alpha \bonej$ such that $\left(\bzetaL, \byj \right) \in \tilde{\Omega}$. As $\bzetaLj$ is the optimal point for $f\left(L \cup \{j\} \right)$ we have 
\begin{align*}
\lbzetaLj - \lbzetaL &\geq l\left(\byj\right) - \lbzetaL \\
&\geq  \left\langle\nabla \lbzetaL,  \alpha \bonej \right\rangle - \frac{\tilde{C}_1}{2} \alpha^2.
\end{align*}

Maximizing w.r.t $\alpha$ we get $\alpha = \frac{\gradljbzetaL}{\tilde{C}_1} \geq 0$. Substituting this maximum value we get
\begin{align}
&\lbzetaLj - \lbzetaL \geq \frac{1}{2 \tilde{C}_1} \left(\gradljbzetaL\right)^2 \nonumber \\
\label{eq:upperBfinal}
\implies & \sum\limits_{j \in S} \left[\lbzetaLj - \lbzetaL \right] \geq \frac{1}{2 \tilde{C}_1} \left\|\nabla l_{S}^+\left(\bzetaL \right) \right\|^2.
\end{align}

From the equations~(\ref{eq:lowerBfinal}) and (\ref{eq:upperBfinal}) we get $\gamma_{L,S} \geq \frac{c_{\bar{m}}}{\tilde{C}_1}$. The minimum over all sets $L$, $S$ proves the theorem.

\subsection{Proof of Lemma~\ref{lem:KKTGradient}}
Recall that the optimization problem for computing the set function $f(L)$ requires that for $j \notin L$, $\x_j = 0$. Let $\lambda_j$ denote the corresponding Lagrange multiplier. The stationarity condition of the unconstrained problem implies that at the optimum $\bzetaL$, $\lambda_j = - \gradljbzetaL \geq 0$. In the optimization problem for computing $f\left(L \cup \{j\} \right)$, $\lambda_j$ is the KKT multiplier for the constraint $\x_j \geq 0$. As $\lambda_j$ satisfies the dual feasibility condition which together with other KKT conditions are both necessary and \emph{sufficient} for the optimality of maximizing concave functions $l$, we get $\bzetaLj = \bzetaL$.

\subsection{Proof of Theorem~\ref{pd}}
Let $L = L_i^D$ be the set chosen by the ProtoDash up to the iteration $i$ such that $L_m^D = L^D$ and $L^{\ast}$ be the optimal set. Define the residual set $L_R = L^{\ast} \setminus L$. Given $L$, let $v$ be the index that would be selected by running next step of ProtoDash. Let $D(i+1) = f(L \cup \{v\}) - f(L)$. Defining $\byv = \bzetaL + \alpha \bonev$ for some $\alpha \geq 0$ and recalling that $\bzetaLv$ is the maximizing point for $f(L \cup \{v\})$ we get
\begin{align*}
D(i+1) &\geq l\left( \byv \right) - \lbzetaL \\
&\geq  \left\langle\nabla \lbzetaL,  \alpha \bonev \right\rangle - \frac{\tilde{C}_1}{2} \alpha^2.
\end{align*}
Setting $\alpha = \frac{\gradlvpbzetaL}{\tilde{C}_1}$ we get 
\begin{align*}
D(i+1) \geq \frac{1}{2 \tilde{C}_1} \left[\gradlvpbzetaL \right]^2 \geq \frac{1}{2 m \tilde{C}_1} \sum\limits_{j \in L_R}  \left[\gradljpbzetaL \right]^2
\end{align*}
where the last inequality is a result of the fact that ProtoDash chooses the coordinate $v$ that maximizes  the gradient value $\nabla \lbzetaL$ and $\left|L_R\right| \leq m$. Letting $\bar{m} = |L|+\left|L_R\right| \leq 2m$, $B(i) = f\left(L^{\ast}\right) - f(L)$ and using (\ref{eq:lowerBfinal}) we find
\begin{align*}
m D(i+1)  \geq \frac{c_{\bar{m}}}{\tilde{C}_1} [f(L \cup L_R) - f(L)]  \geq \frac{c_{2m}}{\tilde{C}_1} B(i)
\end{align*}
where use the inequalities that $c_{\bar{m}} \geq c_{2m}$ and $L^{\ast} \subseteq L \cup L_R$. Setting $\kappa =\frac{c_{2m}}{\tilde{C}_1 m}$ and noting that $D(i+1) = B(i) - B(i+1)$ we get the recurrence relation $B(i+1) \leq (1-\kappa) B(i)$ which when iterated $i$ times starting from step $0$ gives $B(i) \leq (1-\kappa)^i B(0)$.  Plugging in $B(k) =  f\left(L^{\ast}\right) - f\left(L^D\right)$ and $B(0) = f\left(L^{\ast}\right)$ gives us the required inequality
\begin{equation*}
f\left(L^D\right) \geq f\left(L^{\ast}\right) \left[1-(1-\kappa)^m \right] \geq f\left(L^{\ast}\right) \left[1- e^{- \frac{c_{2m}}{\tilde{C}_1}} \right].
\end{equation*}

\subsection{Proof of Corollary~\ref{cor1}}
Let $L$ be the set chosen by the ProtoDash up to the current iteration. For every $j \notin L$, define the vector $\s_j$ of length $|L|$ whose $i^{th}$ element $s_{j,i} = K_{j,i}$ for $i \in L$. Let $\w^{\ast} = \bzetaL_L$, $\w^j = \bzetaLj_L$, $\bmu_{p,L}$ and $K_L$ be the restriction of the corresponding entities on the coordinates specified by $L$ and similarly let $w^j_j = \bzetaLj_j$. Recall that in the next iteration, ProtoDash chooses the prototype $j^D$ such that $j^D=\argmax\limits_{j}\gradljbzetaL =\argmax\limits_{j}\mu_{p,j}-\s_j^T\w^{\ast}$. Pursuant to Lemma~\ref{lem:KKTGradient} we have, if $\mu_{p,j}-\s_j^T\w^{\ast} \leq 0$, then $\bzetaLj = \bzetaL$ and specifically, $\w^j = \w^{\ast}$. Otherwise, the stationarity and complementary slackness KKT conditions entails that $w^j_j = \frac{\mu_{p,j}-\s_j^T\w^j}{K_j}$. Using this value of $w^j_j$, we see that the optimization problem that maximizes
\begin{align}
\label{eq:lj}
l_j(\w) = &-\frac{1}{2} \w^TK_L \w + \bmu_{p,L}^T \w+ \frac{1}{2K_j}\left(\mu_{p,j} - \s_j^T\w\right)^2 \\
&\text{subject to } \w \geq 0, \text{ and } \s_j^T\w \leq \mu_{p,j} \nonumber
\end{align}
attains its optimum at $\w = \w^j$. Particularly, $l_j(\w^j) \geq l_j\left(\w^{\ast}\right), \forall j$. The choice $j^D$ by our ProtoDash method has the property that $l_{j^D}(\w^{\ast}) \geq l_j(\w^{\ast})$ assuming that the prototypes are normalized so that their self-norm $K_j = 1, \forall j$, where as ProtoGreedy choose that index $j^G$ where $l_{j^G}(\w^{j^G}) \geq l_j(\w^j)$. Ergo, ProtoDash selects the prototype $j^D$ that maximizes the lower bound $l_{j^D}(\w^{\ast})$. To see that  $l_j\left(\w^{\ast}\right)$ is a tight lower bound for $l_j(\w^j)$, consider only the first to terms in the right hand side of (\ref{eq:lj}).
From the optimality of $\bzetaL$ we find $-\frac{1}{2} (\w^{\ast})^T K_L \w^{\ast} + \bmu_{p,L}^T \w^{\ast} \geq -\frac{1}{2}(\w^j)^TK_L \w^j + \bmu_{p,L}^T \w^j, \forall j$. 
Hence $\s_j^T\w^j \leq \s_j^T\w^{\ast} \leq \mu_{p,j}$. If $\s_j^T\w^{\ast} \approx \s_j^T\w^j$ or $\s_j^T\w^j \approx \mu_{p,j}$ then lower bound will be tight.

\eat{
We used the following 40 health questionnaires from the Center of Disease Control (CDC): Alcohol Use, Reproductive Health, Occupation, Dermatology, Sexual Behavior, Current Health Status, Income, Hospital Utilization \& Access to Care, Smoking - Secondhand Smoke Exposure, Blood Pressure \& Cholesterol, Diet Behavior \& Nutrition,Medical Conditions, Physical Activity, Weight History, Housing Characteristics, Oral Health, Volatile Toxicant (Subsample), Drug Use, Sleep Disorders, Immunization, Health Insurance, Physical Functioning, Kidney Conditions - Urology, Diabetes, Food Security, Mental Health - Depression Screener, Acculturation, Creatine Kinase,Smoking - Recent Tobacco Use, Consumer Behavior, Disability, Taste \& Smell, Cardiovascular Health, Preventive Aspirin Use, Smoking - Household Smokers, Early Childhood, Pesticide Use, Osteoporosis, Weight History - Youth, Hepatitis.
}
\bibliographystyle{IEEEtran}
\bibliography{proto}
\eat{
\section{Supplement}
Here we provide code for ProtoDash in the spirit of reproduceability, where it runs on the NHANES disability datafile. It can be similarly called for other datasets.\\

\subsection{Main Function}

clear;

feature =0; //rows are features else if 1 then rows are datapoints

dataType = 'NHANES';

kernelType = 'None';
// Level of sparsity

m = 10;

switch dataType

    case 'random'
        // Load random data
        
        X = randn(100,300);
        
        Y = randn(100,4000);
        
        normalizeData = true;
        
        if (normalizeData)
        
            X = normc(X);
            
            Y = normc(Y);
            
        end
        
    case 'NHANES'
        // Load NHANES 2013-2014 Disability datafileName = '2\_H.XPT';
        
        [Y,originalData] =
        
        get\_Processed\_NHANES\_Data(fileName,feature);
        
        X = Y;
end

// Call ProtoDash

[w(1,:),S(1,:),setValues] = ProtoDash(X,Y,m,kernelType,2);

\subsection{ProtoDash Function}

function [currOptw,S,setValues] = ProtoDash(X,Y,m,kernelType,varargin) 

    if(strcmpi(kernelType,'Gaussian'))
    
        if(~isempty(varargin))
        
            sigma = varargin{1};
            
        else
        
            sigma = 10; //Find kernel width using cross-validation
            
        end
        
    end
    
    numY = size(Y,2);
    
    numX = size(X,2);
    
    allY = 1:numY;
    
    // Store the mean inner products with X
    
    meanInnerProductX = zeros(numY,1);
    
    for i = 1:numY
    
        switch kernelType
        
            case 'Gaussian'
            
                distX = pdist2(X',Y(:,i)');
                
                meanInnerProductX(i) = sum(exp(-distX.$^2$/(2*sigma$^2$)))/numX;
                
            otherwise
            
                meanInnerProductX(i) = sum(Y(:,i)'*X)/numX;
                
        end
        
    end
    
    // Intialization
    
    S = zeros(1,m);
    
    timeTaken = zeros(1,m);
    
    setValues = zeros(1,m);
    
    sizeS = 0;
    
    currSetValue = 0;
    
    currOptw = [];
    
    currK = [];
    
    curru = [];
    
    runningInnerProduct = zeros(m,numY);
    
    while (sizeS < m)
    
        tic;
        
        remainingElements = setdiff(allY,S(1:sizeS));
        
        newCurrSetValue = currSetValue;
        
        maxGradient=0;
        
        for count = 1:length(remainingElements)
            
            i = remainingElements(count
            
            newZ = Y(:,i);
            
            if (sizeS==0)
            
                switch kernelType
                
                    case 'Gaussian'
                    
                        K = 1;
                        
                    otherwise
                    
                        K = newZ'*newZ;
                        
                end
                
                u = meanInnerProductX(i);
                
                w = max(u/K,0);
                
                incrementSetValue = -0.5*K*(w$^2$) + u*w;
                
                if((incrementSetValue > newCurrSetValue)|| count==1)
                
                    newCurrSetValue = incrementSetValue;
                    
                    desiredElement = i;
                    
                    newCurrOptw = w;
                    
                    currK = K;
                    
                end
                
            else
            
                recentlyAdded = Y(:,S(sizeS));
                
                switch kernelType
                
                    case 'Gaussian'
                    
                        distnewZ = norm(recentlyAdded-newZ);
                        runningInnerProduct(sizeS,i) = exp(-distnewZ.$^2$/(2*sigma$^2$));
                        
                    otherwise
                        runningInnerProduct(sizeS,i) = recentlyAdded'*newZ;
                        
                end
                
                innerProduct = runningInnerProduct(1:sizeS,i);
                
                gradientVal = meanInnerProductX(i)-currOptw'*innerProduct;
                
                if((gradientVal > maxGradient)|| count==1)
                
                    maxGradient = gradientVal;
                    
                    desiredElement = i;
                    
                    newinnerProduct = innerProduct(:);
                    
                end
                
            end
            
        end
        
        sizeS = sizeS+1;
        
        S(sizeS) = desiredElement;
        
        curru = [curru;meanInnerProductX(desiredElement)];
        
        if(sizeS > 1)
            
            switch kernelType
            
                case 'Gaussian'
                
                    selfNorm = 1;
                    
                otherwise
                
                    addedZ = Y(:,desiredElement);
                    
                    selfNorm =  addedZ'*addedZ;
                    
            end
            
            K1 = horzcat(currK,newinnerProduct(:));
            
            K2 = vertcat(K1,[newinnerProduct',selfNorm]);
            
            currK = K2;
            
            if(maxGradient<=0)
            
                newCurrOptw = [currOptw(:);0];
                
                newCurrSetValue = currSetValue;
                
            else
            
                [newCurrOptw,value] = runOptimiser(currK,curru,currOptw, maxGradient);
                
                newCurrSetValue = -value;
                
            end
            
        end
        
        currOptw = newCurrOptw;
        
        currSetValue = newCurrSetValue;
        
        setValues(sizeS) = currSetValue;
        
        timeTaken(sizeS) = toc;
        
        fprintf('Finished choosing \%d elements\\n',sizeS);
        
    end
    
end

\subsection{Process NHANES Data}

function [encodedData,originalData] = 

get\_Processed\_NHANES\_Data(fileName,features)

    oneHotEncode = true;
    
    rawData = xptread(fileName);
    
    if features ==1
    
        originalData = table2array(rawData);
        
        originalData = originalData(:,2:end);
        
    else
    
        originalData = table2array(rawData)';
        
        originalData(1,:) = [];
        
    end

    originalData(isnan(originalData)) = 0;
    
    if(oneHotEncode)
    
        [encodedData,~] = performOneHotEncoding(originalData);
        
    else
    
        encodedData = originalData;
        
    end
    
end

\subsection{One Hot Encoding Function}

function [encodedData,uniqueLabels] = performOneHotEncoding(Z)

    d = size(Z,1);
    
    startPos = 0;
    
    encodedData = [];
    
    uniqueLabels = cell(d,1);
    
    for feat = 1:d
    
        values  = Z(feat,:);
        
        labels = unique(values);
        
        uniqueLabels{feat} = labels;
        
        numLabels = length(labels);
        
        nSamples = length(values);
        
        codedMatrix = zeros(numLabels,nSamples);
        
        for i = 1:numLabels
        
            codedMatrix(i,values==labels(i)) = 1;
            
        end
        
        encodedData(startPos+1:startPos+numLabels,:) = codedMatrix;
        
        startPos = startPos+numLabels;
        
    end
    
end

\subsection{Run Optimizer Function}

function [w,value] = runOptimiser(K,u,preOptw,initialValue,varargin)

    algorithmName = 'interior-point-convex';
    
    if(~isempty(varargin))
    
        maxWeight = varargin{1};
        
    else
    
        maxWeight = 100000;
        
    end
    
    d = length(u);
    
    options = optimoptions(@quadprog,'Display','off','MaxIter',500,'TolFun',1e-8,...
            'TolX',1e-8,'Algorithm',algorithmName);
            
    lb = zeros(d,1);
    
    ub = maxWeight*ones(d,1);
    
    x0 = [preOptw(:);initialValue/K(d,d)];
    
    [w,value] = quadprog(K,-u,[],[],[],[],lb,ub,x0,options);
    
end
}

\end{document}